\documentclass{article} %
\usepackage{iclr2027_conference,times}

\usepackage{amsmath,amsfonts,bm}

\def\eqref#1{equation~\ref{#1}}

\def\1{\bm{1}}

\DeclareMathAlphabet{\mathsfit}{\encodingdefault}{\sfdefault}{m}{sl}
\SetMathAlphabet{\mathsfit}{bold}{\encodingdefault}{\sfdefault}{bx}{n}

\newcommand{\E}{\mathbb{E}}

\newcommand{\R}{\mathbb{R}}

\usepackage{hyperref}
\usepackage{url}
\usepackage{enumitem}
\usepackage{graphicx,booktabs,xcolor}
\usepackage{wrapfig}
\usepackage{needspace}
\usepackage{amsthm}
\usepackage{algorithm}
\usepackage{algpseudocode}

\usepackage[font=small,skip=3pt]{caption}
\AddToHook{env/figure/begin}{\setlength{\parskip}{0pt}}
\AddToHook{env/table/begin}{\setlength{\parskip}{0pt}}
\AddToHook{env/wraptable/begin}{\setlength{\parskip}{0pt}}

\AtBeginDocument{\raggedbottom}

\newtheorem{proposition}{Proposition}

\theoremstyle{definition}

\theoremstyle{plain}

\hypersetup{
    colorlinks=true,
    citecolor=blue!60!black,
    linkcolor=blue!60!black,
    urlcolor=blue!60!black,
    pdftitle={Remember Less, Adapt Longer: Bounded-State In-Context Reinforcement Learning with Recurrent Summary Memory},
    pdfauthor={Jiawen Wang, Cheng Xue, Peng Zhang, Jochen Renz}
}

\title{Remember Less, Adapt Longer:\\
Bounded-State In-Context Reinforcement\\
Learning with Recurrent Summary Memory}

\author{Jiawen Wang, Cheng Xue, Peng Zhang, Jochen Renz \\
The Australian National University\\
\texttt{\{jiawen.wang, cheng.xue, peng.zhang, jochen.renz\}@anu.edu.au} \\
}

\newif\ifdraftnotes
\draftnotesfalse

\newcommand{\Hist}{\mathcal{H}}

\newcommand{\ourmethod}{Recurrent Summary Memory}
\newcommand{\method}{RSM}

\newcommand{\nocarry}{\method{} w/o memory}

\newcommand{\READ}{\textsc{Read}}
\newcommand{\RECORD}{\textsc{Record}}
\newcommand{\WRITE}{\textsc{Write}}
\newcommand{\dlt}{\delta}

\iclrfinalcopy %
\begin{document}

\maketitle

\begin{abstract}
In-context reinforcement learning (ICRL) enables agents to adapt to new tasks without parameter updates, relying on memory to retain what they learn from experience. For agents operating continually, existing approaches using memory trade off bounded cost against access to past experience.
Approaches using growing memories, such as full-history attention and accumulating summaries, incur storage and per-decision costs that increase with task length.
Fixed-size memories bound storage but restrict access: sliding windows discard older steps, while recurrent networks compress even recent interactions into a hidden state.
We introduce Recurrent Summary Memory (RSM), a Transformer policy that attends exactly to a short working buffer of recent interactions and carries a fixed-capacity summary of earlier experience.
When the buffer fills, RSM rewrites the summary from its previous contents and the buffer, then clears the buffer, so deployment memory and per-decision cost stay $O(1)$ in task length.
On long-running partially observable tasks, frozen RSM policies sustain their adapted performance for eight times the training horizon with about 1\% of full-history memory, ending ahead of every evaluated baseline while growing memories deteriorate.
When the task configuration changes mid-run, RSM re-adapts better than growing and recurrent memories.
Clearing the summary or blocking the policy's access to it removes its benefit, while swapping in another task's summary steers behaviour towards that task.
Together, these results show that our fixed-capacity summary can sustain in-context adaptation far beyond the training horizon.
\end{abstract}

\section{Introduction}
\label{sec:introduction}

An agent placed in an unfamiliar environment often cannot directly observe everything it needs to act, so it must use past experience to inform its decisions and adapt to the environment \citep{kaelbling1998planning}.
In Dark Key-to-Door \citep{laskin2022context}, for example, an agent that searches a dark room for a hidden key and then a door should reuse the key's location on every later attempt; in a maze with unfamiliar controls, an inferred mapping from actions to movements should support navigation throughout the task.
In-context reinforcement learning (ICRL) trains policies to adapt in this way: a single policy is trained across tasks to condition its decisions on its history of observations, actions, and rewards, enabling adaptation without test-time parameter updates \citep{laskin2022context,lee2023supervised,grigsby2024amago}.
Because its parameters remain fixed, memory must carry the task knowledge acquired during deployment \citep{duan2016rl,wang2016learning}.
For an agent that keeps operating, continued adaptation therefore depends on preserving useful discoveries.

Existing memory architectures trade off bounded cost against access to experience.
Full-history Transformers \citep{parisotto2020stabilizing,grigsby2024amago} let a policy revisit individual records, but their key/value caches and per-decision attention costs grow with task duration.
Sliding windows bound these costs by discarding older records, even when their information remains useful \citep{xiao2024efficient}.
Recurrent networks such as GRUs \citep{cho2014learning} maintain a fixed-size hidden state and provide strong baselines under partial observability, but compress even recent interactions \citep{heess2015memory,ni2022recurrent,morad2023popgym}.
Summary memory offers a way to separate immediate access from longer-term retention: recent records provide direct context, while a compact representation carries earlier experience \citep{rae2019compressive,ge2024incontext}.
Segment-level Transformers implement this separation with local attention and representations carried between segments \citep{dai2019transformer,hutchins2022block}: Recurrent Memory Transformers (RMTs) carry a fixed bank of memory tokens \citep{bulatov2022recurrent}, whereas Memo appends one summary block per segment, so its state still grows \citep{gupta2025memo}.

A fixed-capacity summary eliminates storage growth, but bounded storage alone does not guarantee sustained adaptation.
Once a segment is discarded, its useful evidence must survive not only the next summary, but every subsequent rewrite until it is needed.
Re-exploring the room may recover a forgotten key location, but not a cue presented only once \citep{ni2023memory}.
Conversely, when the hidden task changes, retaining obsolete evidence can obstruct adaptation.
The summary must therefore learn both to preserve relevant information and to revise it.
These demands grow beyond the training horizon, where memory undergoes more rewrites than in training.
Success on finite training tasks need not imply reliable memory during continued operation; indeed, finite and continuing memory tasks can reverse the relative performance of recurrent and attention-based agents \citep{pleines2025memory}.
We therefore ask: \emph{Can a Transformer policy sustain useful in-context adaptation beyond its local context and training horizon under a fixed deployment memory budget?}

We address this question with \textbf{\ourmethod{} (\method{})}, a Transformer policy with a recurrent-memory architecture.
\method{} combines exact attention over a short-term working buffer of the current segment's interactions with a fixed-capacity long-term summary.
Here, \emph{exact} means that each decision attends directly to the cached representation of every record in the buffer, not that the whole history is kept.
At each segment boundary, it writes a replacement summary from the previous summary and buffered steps, then clears the buffer, so only the new summary carries earlier experience into the next segment.
What the summary retains is learned only through the actor--critic losses, not set by token-retention rules \citep[StreamingLLM;][]{xiao2024efficient} or auxiliary reconstruction losses \citep[Compressive Transformers;][]{rae2019compressive}.
With fixed capacities, \method{}'s deployment state and per-decision computation are $O(1)$ in task length: in our experiments it holds 249.9\,KB per agent, whereas a full-history cache of the same backbone grows by 6\,KB per step, to about 5\,GB after a day of 10\,Hz control.
Such bounds matter for robots with fixed onboard memory, game agents acting over thousands of steps, and services running many agents in parallel.

To test whether this bounded memory sustains adaptation, we evaluate \method{} on four partially observable benchmarks: Dark Key-to-Door (location reuse; \citealp{laskin2022context}), CountRecall (evidence accumulation; \citealp{morad2023popgym}), MazeRunner (navigation with unknown action mappings; \citealp{grigsby2024amago}), and the passive T-Maze (one-time-cue recall; \citealp{ni2023memory}).
We compare against a full-history Transformer, a GRU, accumulating-summary Memo, and a state-matched sliding window, trained with the same learner, inputs, losses, and interaction budget.
On these benchmarks, a fixed-capacity summary sustains in-context adaptation far beyond the training horizon, answering our question affirmatively.
Specifically, our contributions are:
\begin{itemize}[noitemsep, topsep=2pt]
    \item \textbf{Bounded summary memory for online in-context RL.}
    We specify and evaluate \method{}, combining an exact segment-local working buffer with a fixed-capacity summary produced by learned write queries and a projected overwrite. \method{} is trained with AMAGO's \citep{grigsby2024amago} off-policy actor--critic losses through every carried write. Its deployment state and per-decision cost remain \(O(1)\) in task length (Section~\ref{sec:method}).
    \item \textbf{Sustained adaptation beyond the training horizon.}
    \method{} remains competitive at the training horizon; frozen \method{} policies maintain adapted performance over eight times that horizon and re-adapt better than growing and recurrent memories when the hidden task changes, whereas growing memories deteriorate (Section~\ref{sec:results-endpoint}).
    \item \textbf{Evidence that decisions depend on summary content.}
    Clearing the summary, transplanting another task's summary into it, and blocking the policy's access to it show that the policy uses its content; linear probes recover hidden task variables; and write-rule ablations show greater stability with overwriting than with additive updates (Sections~\ref{sec:results-mechanism}--\ref{sec:ablations}).
\end{itemize}

\section{Preliminaries and Problem Formulation}
\label{sec:problem}

\noindent\textbf{Task setting and in-context adaptation.}
A task $\xi$, such as a room layout or an action mapping, defines a finite-horizon POMDP
$\mathcal M_\xi=(\mathcal X,\mathcal O,\mathcal A,
P_\xi,\Omega_\xi,R_\xi,\rho_\xi,T_\xi)$ \citep{kaelbling1998planning}, with
hidden states $x_t\in\mathcal X$, observations $o_t\in\mathcal O$ drawn from the
observation function $\Omega_\xi(\cdot\mid x_t)$, actions $a_t\in\mathcal A$, dynamics $P_\xi$,
reward function $R_\xi$, initial-state distribution $\rho_\xi$ and horizon
$T_\xi$.
The agent plays each task as one \emph{outer task}, an episode of
$\mathcal M_\xi$ that may contain several physical attempts; an attempt reset is a transition of $P_\xi$ back to a
start state under the same $\xi$. At decision $t$ the policy conditions on
the history $\Hist_t=(o_0,a_0,r_1,o_1,\ldots,a_{t-1},r_t,o_t)$, where
$r_{t+1}=R_\xi(x_t,a_t)$ is the environment's native (unscaled) reward.
Every interface call is a step, including the reset-only call that starts
each new attempt: it takes no physical action, so the environment ignores
$a_t$, returns the new attempt's first observation and gives no reward
($r_{t+1}=0$), yet it counts towards the task length and the training budget. ICRL
learns one set of parameters $\theta$ for the policy and its encoders, shared
across all training tasks, so that the policy adapts to new tasks through
interaction \citep{duan2016rl,wang2016learning,grigsby2024amago}. It maximises
the expected discounted return $J_\gamma(\theta)$ over training tasks
$\xi\sim p_{\rm train}$ and the outer-task trajectories $\tau$ that $\pi_\theta$
generates in $\mathcal M_\xi$, with discount $\gamma\in(0,1]$:
\begin{equation}
 J_\gamma(\theta)=
 \E_{\xi\sim p_{\rm train},\,\tau\sim\pi_\theta,\mathcal M_\xi}
 \left[\sum_{t=0}^{T_\xi-1}\gamma^t r_{t+1}\right].
 \label{eq:icrl-objective}
\end{equation}
Because the return sums the rewards of every attempt in an outer task,
information gathered in early attempts pays off in later ones. At evaluation, $\theta$ is frozen and the policy runs on held-out
tasks disjoint from training; we report native, undiscounted metrics.
More details in Appendix~\ref{app:learning-objective}.

\noindent\textbf{History-dependent policies.}
All compared policies receive the same public input, the current observation
with the native RL$^2$ action--reward feedback \citep{duan2016rl}; task
identifiers, hidden layouts and evaluator labels never enter it. A stateless
timestep encoder $f_\phi$ embeds each step's record $u_t$ (with $a_{-1}$ and
$r_0$ set to zero), and a causal history encoder $F_\theta$ updates the
policy state:
\begin{equation}
 u_t=(o_t,a_{t-1},r_t),\qquad e_t=f_\phi(u_t),\qquad
 (m_t,s_t)=F_\theta(s_{t-1},e_t),\qquad a_t\sim\pi_\theta(\cdot\mid m_t).
 \label{eq:input}
\end{equation}
Here, $s_t$ denotes all history-dependent state that the policy retains for
future decisions, $m_t$ the actor--critic representation and $s_{-1}$ a
task-independent initial state; $\theta$ includes $\phi$. The state $s_t$ is
updated at every step and kept across attempt resets, and returns to $s_{-1}$
only when a new outer task begins.

\noindent\textbf{Bounded-state adaptation objective.}
We seek a policy that adapts through interaction while its
history-dependent deployment state respects a fixed budget $B$:
\begin{equation}
 \max_\theta J_\gamma(\theta)
 \quad\text{subject to}\quad
 \operatorname{bytes}(s_t)\leq B
 \quad\text{for all }\xi,\ \Hist_t,\ 0\leq t<T_\xi,
 \label{eq:budget}
\end{equation}
where $\operatorname{bytes}(s_t)$ is the memory needed to store $s_t$ at
implementation precision (its summary or hidden-state tensors, key/value
caches and counters, but not the parameters, workspace or training resources)
and $B$ does not depend on $T_\xi$: the deployment state must be $O(1)$ in the
task length, whereas a full-history Transformer, whose cache holds every
record, needs $O(T_\xi)$.
More details in Appendices~\ref{app:cost} and~\ref{app:state-others}.

\section{\ourmethod{}}
\label{sec:method}

\method{} (Figure~\ref{fig:method}) processes interaction history in segments of $C=32$ admitted records, using an exact segment-local working buffer and a recurrent summary of $M=4$ tokens. The same causal Transformer produces decision representations and the next summary. Learned write queries attend to the previous summary and the completed segment; a learned projection of their final-normalised outputs replaces the summary before the buffer is cleared.

\begin{figure}[!t]
  \centering
  \includegraphics[width=\linewidth]{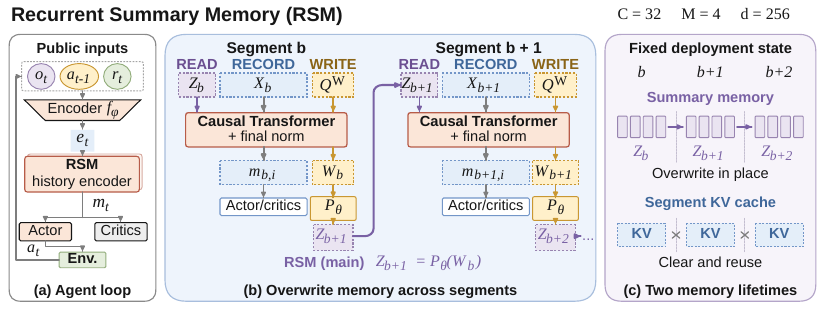}
  \caption{\textbf{Recurrent Summary Memory}.
  (a) The stateless encoder $f_\phi$ maps each public record to $e_t$;
  RSM supplies $m_t$ to the actor and critics.
  (b) $X_b=(e_{b,1},\ldots,e_{b,n_b})$ contains the segment's records.
  Each RECORD output $m_{b,i}$ uses $Z_b$ and the causal prefix $X_{b,1:i}$;
  normalised WRITE outputs $W_b$ yield $Z_{b+1}=P_\theta(W_b)$ for the next
  segment. Parameters are shared across segments; token embeddings are omitted.
  (c) Successive snapshots of the same bounded deployment state:
  the summary is replaced and the segment KV cache cleared (crosses).
  Counters are also retained; full-task training memory grows with task length.}
  \label{fig:method}
\end{figure}

\subsection{Working buffer and recurrent summary}
\label{sec:rsm-layout}

We partition the per-step embeddings $e_t=f_\phi(u_t)$ of Equation~\ref{eq:input} into
consecutive segments of $C=32$ records: record $t$ belongs to segment
$b=\lfloor t/C\rfloor$ at position $i=t-bC+1$, so that $e_{b,i}=e_t$ and
$m_{b,i}=m_t$. Segment $b$ receives a summary $Z_b\in\R^{M\times d}$ of
$M=4$ tokens (its rows) and the records $X_b=(e_{b,1},\ldots,e_{b,n_b})$, where
$1\leq n_b\leq C$ and only the last segment of a task can have $n_b<C$.
One causal Transformer \citep{vaswani2017attention} of width $d=256$, with
$L=3$ layers and eight heads,
supplies both the decision representation and the next summary:
\begin{equation}
 m_{b,i}=\mathcal R_\theta(Z_b,X_{b,1:i}),\qquad
 Z_{b+1}=\mathcal U_\theta(Z_b,X_b),\qquad Z_0=Z_{\rm init},
 \label{eq:summary-interface}
\end{equation}
where $Z_{\rm init}$ is a learned, task-independent initial summary. Both
maps share the slot layout
\begin{equation}
 [\;\READ(Z_b)\;|\;\RECORD(X_b)\;|\;\WRITE(Q^{\rm W})\;],
 \label{eq:layout}
\end{equation}
where $Q^{\rm W}\in\R^{M\times d}$ is a learned bank of write queries, and
\READ{}, \RECORD{} and \WRITE{} embed their arguments as rows of that role,
with role, memory-slot and segment-local position embeddings and, for
records, an input projection. The working
buffer is the per-layer key/value cache of these $M+C+M$ slots. All layers
apply ordinary causal attention over this slot order. The
final-normalised RECORD and WRITE outputs are the decision representations
$m_{b,i}$ and the rows of $W_b\in\R^{M\times d}$. Because the WRITE slots follow every RECORD
slot, $W_b$ cannot affect the current segment's decisions and reaches the
policy only through $Z_{b+1}$; READ and WRITE slots are not environment steps
and receive no direct RL loss. More details in
Appendices~\ref{app:embeddings} and~\ref{app:framework}.

\subsection{Overwrite update and training}
\label{sec:memory-updates}

At each boundary \method{} replaces the carried summary with the projected
WRITE outputs; the residual-write ablation \mbox{\method{}-R} adds them instead:
\begin{equation}
 Z_{b+1}=P_\theta(W_b)\quad\text{(\method{})},\qquad
 Z_{b+1}=Z_b+P_\theta(W_b)\quad\text{(\method{}-R)},
 \label{eq:write}
\end{equation}
where $P_\theta$ applies one affine map $\R^d\to\R^d$ to each of the $M$
rows, so that $\mathcal U_\theta(Z_b,X_b)=P_\theta(W_b)$ for \method{}. The
WRITE outputs depend on $Z_b$ through the READ rows, which makes the
overwrite recurrent without an additive identity path between summaries:
the writer must express both the retained information and the new evidence
in $M$ output tokens. Only $Z_{b+1}$ crosses a boundary: the working buffer
is cleared and the slot counter restarts. 
During cached interaction, a write is executed only when a full segment is followed by another record. Dense training also computes a final WRITE output, but it has no downstream decision loss.

\method{} is trained with AMAGO's off-policy actor--critic learner
\citep{grigsby2024amago}, shared by all compared policies: each update replays complete outer
tasks, recomputes their summaries from $Z_{\rm init}$ under the current
weights and backpropagates through every carried write
\citep{werbos1990backpropagation}, with no summary
target or auxiliary memory loss. Every later loss therefore reaches every
earlier write through $Z_{b+1}=P_\theta(W_b)$; under the overwrite, the
Jacobian $\partial Z_{b+2}/\partial Z_{b+1}$ passes only through attention to
the READ rows, whereas \mbox{\method{}-R} adds an identity term. 
Before each collection block, the parallel agents' states are rebuilt from their consumed public prefixes under the current weights.
More details in Appendices~\ref{app:framework}
and~\ref{app:learning-objective}.

\subsection{Deployment-state and summary bounds}
\label{sec:rsm-state}

\begin{proposition}[Bounded deployment state]
\label{prop:bounded-state}
For fixed $C$, $M$, $d$ and $L$, each agent of a frozen \method{} policy
retains its summary ($Md$ values), the working buffer's per-layer key/value
caches ($2L(C+2M)d$ values), a 32-bit occupied-slot counter and a 64-bit
segment counter. With its tensors in 32-bit floating point (FP32), this state
occupies
\begin{equation}
 B_{\rm deploy}=4\bigl[Md+2L(C+2M)d\bigr]+12
 \quad\text{bytes},
 \label{eq:state-bound}
\end{equation}
independent of task duration, for \mbox{\method{}-R} and \nocarry{} alike.
\end{proposition}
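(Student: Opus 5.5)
The argument is an inventory of $s_t$ followed by a byte count. I would first identify exactly what constitutes the history-dependent state $s_t$ of Equation~\ref{eq:input} for \method{}, using the description in Sections~\ref{sec:rsm-layout}--\ref{sec:memory-updates}. Between decisions, three things must persist:
\begin{itemize}[noitemsep, topsep=2pt]
  \item the current summary $Z_b\in\R^{M\times d}$;
  \item the per-layer key/value cache of the slot layout in Equation~\ref{eq:layout}, so that the next RECORD slot can attend causally to the READ rows and earlier records, and so that the WRITE slots can later attend to everything;
  \item the bookkeeping needed to know where the next record goes (the occupied-slot counter) and the segment index $b$ (the segment counter, used e.g.\ for segment-level embeddings and deciding writes).
\end{itemize}
I would argue that nothing else is needed. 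The parameters, $Q^{\rm W}$ and $Z_{\rm init}$ are frozen and excluded by the definition of $\operatorname{bytes}(\cdot)$ in Equation~\ref{eq:budget}. The WRITE outputs $W_b$ are transient: they are consumed immediately by $P_\theta$ at the boundary, and only $Z_{b+1}$ crosses it. Past segments' records are never retained, because the buffer is cleared at each write.

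Next I would bound each component uniformly in $t$. The summary holds $Md$ values at all times, since Equation~\ref{eq:write} maps $M$ rows to $M$ rows for both the overwrite and residual variants. The cache holds, per layer, one key and one value vector of width $d$ for each slot. Within a segment there are at most $M$ READ slots, $n_b\le C$ RECORD slots and $M$ WRITE slots, so the total is at most $2L(C+2M)d$ values. The counters occupy $4+8=12$ bytes. At FP32 each value is 4 bytes, which gives Equation~\ref{eq:state-bound}. No term depends on $T_\xi$ or $b$, so $B_{\rm deploy}$ is an admissible budget $B$.

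The main point requiring care is showing that the cache never exceeds $C+2M$ slots, i.e.\ that it is truly cleared rather than accumulating across segments. Here I would invoke the cached-interaction rule: a write is executed only when a full segment ($n_b=C$) is followed by another record. At that moment $Z_{b+1}=P_\theta(W_b)$ is formed, the buffer is cleared, and the slot counter restarts. Before the next record is admitted, the new segment's READ slots are re-embedded from $Z_{b+1}$. By induction on $b$, the cache contents at any step are a subset of a single segment's $M+C+M$ slots. Since we claim an upper bound, it does not matter whether WRITE keys/values are materialised only transiently.

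Finally, for the variants, the layout and the counters are unchanged. \method{}-R differs only in the update rule, which still produces an $M\times d$ summary. \nocarry{} resets to $Z_{\rm init}$ but still stores an $M\times d$ summary tensor and the same cache. The same count therefore applies to both, completing the proof.
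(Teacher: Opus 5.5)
Your proposal is correct and follows essentially the same route as the paper's proof. Both inventory the $Md$ summary values, the $L$ per-layer key/value slabs of $C+2M$ slots and the two counters, then argue by induction over records and boundaries that writing, clearing the cache and reloading the READ rows from the new summary restores the same layout, with the residual and no-carry rules changing only values and not the allocation.

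Only small refinements separate you from the paper. The paper counts a fixed preallocated slab, which gives the stated byte count exactly rather than as your upper bound. It also notes that the fixed-width counters keep the count duration-independent only within their supported range. Finally, your side remark about the segment counter is off but harmless: writes are triggered by the slot counter reaching $M+C$, and the position embeddings are segment-local.
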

\textbf{Space complexity.} With the values of
Section~\ref{sec:rsm-layout}, $B_{\rm deploy}=249{,}868$ bytes per agent, so \method{}'s deployment state is $O(1)$ in the
task length $T_\xi$, as for a GRU \citep{cho2014learning}, whereas a
full-history Transformer of the same $L$ and $d$ caches keys and values for
every record ($2LT_\xi d$ values) and Memo \citep{gupta2025memo} for $M$
summary tokens per segment (about $2LMT_\xi d/C$ values), both $O(T_\xi)$.
Every \method{} query also attends to at most $C+2M=40$ cached slots, so
attention per decision is $O(1)$ in $T_\xi$ as well. The overwrite also keeps
every summary row within a fixed bound at any horizon, whereas the residual
write admits only a bound that grows with every write
(Proposition~\ref{prop:summary-range}). More details in
Appendix~\ref{app:memory-analysis}.

\section{Experiments}
\label{sec:experiments}

We ask (1)~whether
\method{} adapts as well as growing memories, at and far beyond the training
horizon (Section~\ref{sec:results-endpoint}); (2)~whether its
decisions use the summary (Section~\ref{sec:results-mechanism}); and
(3)~which design choices matter (Section~\ref{sec:ablations}).

\subsection{Experimental setup}
\label{sec:setup}

\begin{figure}[!t]
 \centering
 \makeatletter\def\@captype{table}\makeatother
 \caption{\textbf{Performance at the training horizon and over long tasks}
(settings in Section~\ref{sec:setup}). Training horizon: Key-to-Door (KD) doors
per 500 steps, CountRecall (CR) exact accuracy, MazeRunner (MR) fraction of
goals reached, T-Maze (TM) success. Long-running: doors or goals per 500 steps, in the last 500 of
4,000 steps (same task), per later layout or on the fourth map (new task). CR
evicted: accuracy on evidence that left the buffer. State: peak per agent. Bold: best compared memory.}
\label{tab:results}
{\footnotesize
\setlength{\tabcolsep}{2.6pt}
\begin{tabular}{lcccc@{\hspace{5pt}}cccccr}
\toprule
 & \multicolumn{4}{c}{Training horizon} & \multicolumn{4}{c}{Long-running, 4,000 steps} & & \\
\cmidrule(lr){2-5}\cmidrule(lr){6-9}
 & & & & & \multicolumn{2}{c}{same task} & \multicolumn{2}{c}{new task} & & \\
\cmidrule(lr){6-7}\cmidrule(lr){8-9}
 & KD $\uparrow$ & CR $\uparrow$ & MR $\uparrow$ & TM $\uparrow$ & KD $\uparrow$ & MR $\uparrow$ & KD $\uparrow$ & MR $\uparrow$ & CR evicted $\uparrow$ & State $\downarrow$ \\
\midrule
\textbf{\method{} (ours)} & 29.1 & 0.989 & 0.804 & \textbf{1.000} & \textbf{46.0} & \textbf{26.6} & \textbf{9.4} & \textbf{23.0} & 0.986 & 249.9\,KB \\
Full-history Transformer & 27.4 & \textbf{0.998} & 0.737 & \textbf{1.000} & 28.2 & 0.8 & 0.8 & 0.4 & \textbf{0.998} & 24.58\,MB \\
GRU & \textbf{29.6} & 0.930 & 0.643 & \textbf{1.000} & 40.6 & 19.1 & 1.6 & 16.1 & 0.937 & \textbf{3.1\,KB} \\
Memo & \textbf{29.6} & 0.956 & \textbf{0.863} & \textbf{1.000} & 10.6 & 1.8 & 0.5 & 2.0 & 0.939 & 3.24\,MB \\
Memo, fixed segments & 26.8 & 0.944 & 0.836 & \textbf{1.000} & 31.4 & 2.8 & 0.6 & 2.9 & 0.933 & 3.24\,MB \\
\midrule
\method{} w/o memory & 2.0 & 0.680 & 0.738 & 0.514 & 2.9 & 19.2 & 1.9 & 17.9 & 0.538 & 249.9\,KB \\
\method{}-R (residual) & 29.8 & 0.990 & 0.756 & 1.000 & 45.2 & 5.0 & 1.8 & 0.4 & 0.988 & 249.9\,KB \\
\bottomrule
\end{tabular}}

 \vspace{4pt}
 \makeatletter\def\@captype{figure}\makeatother
 \includegraphics[width=\linewidth,trim=0bp 1bp 0bp 2bp,clip]{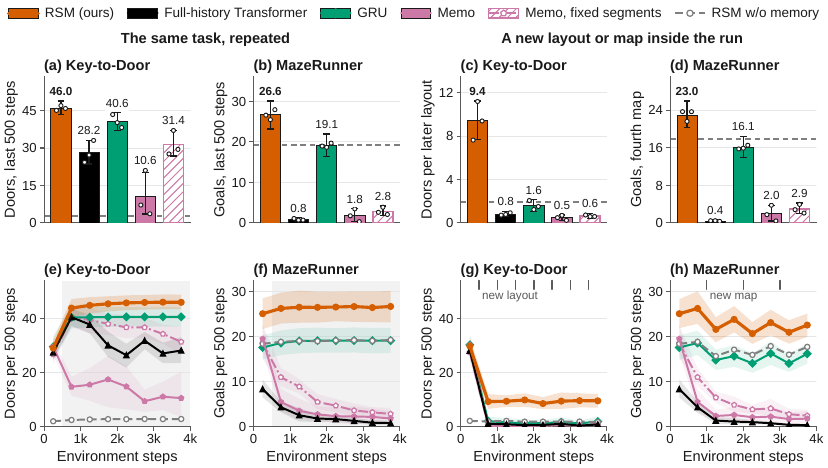}
 \caption{\textbf{Long-running tasks} (4,000 steps).
Columns: the same layout or map throughout, or a new one at intervals.
(a--d) Final-window scores, doors per later layout, or goals on the fourth
map; dots: seeds; dashed: \nocarry{}. (e--h) Scores per 500-step window;
shading marks steps beyond training, ticks mark task changes.}
 \label{fig:at-a-glance}
\end{figure}

\textbf{Environments.}
We use four partially observable benchmarks.
In \textbf{Dark Key-to-Door} (Key-to-Door below;
\citealp{laskin2022context,grigsby2024amago}), the
agent searches an $8\times8$ room for a hidden key and door over repeated
attempts in one 500-step task; \textbf{CountRecall} (POPGym;
\citealp{morad2023popgym}) deals 104 cards from four categories and asks at
every step how many of a queried category have been dealt; in
\textbf{MazeRunner} \citep{grigsby2024amago}, the agent reaches three ordered
goals in a hidden $15\times15$ maze with permuted actions in one 500-step
attempt; and the \textbf{passive T-Maze} \citep{ni2023memory} requires
recalling a cue shown only at the first step of a 128-cell corridor.

\textbf{Baselines and Ablations.}
All policies share the stateless timestep encoder, the public inputs and the
AMAGO off-policy learner; the Transformers share width 256, three layers and eight
heads. We compare \method{} (Section~\ref{sec:method}) with the \textbf{full-history
Transformer}, whose key--value cache grows with every record; AMAGO's
recurrent \textbf{GRU}, whose state is fixed; and \textbf{Memo}
\citep{gupta2025memo}, which appends four summary tokens per 32-record segment,
both as published (jittered segment lengths) and with \textbf{fixed segments}.
We call the full-history Transformer and Memo the
\emph{growing memories}.
The \textbf{sliding window} is a summary-free Transformer attending only to
the last $W=40$ records at every layer, the largest window within \method{}'s
per-agent state.
The ablations train \textbf{\method{} w/o memory}, which resets the summary at
every segment boundary; \textbf{\mbox{\method{}-R}}, which adds the write to
the carried summary instead of overwriting it (Equation~\ref{eq:write});
\method{} with $M\in\{1,4,8,16\}$ summary tokens on Key-to-Door; and, on
CountRecall, \textbf{\method{} detach}, whose summary is detached at every
boundary so that no later loss reaches the write.

\textbf{Evaluation Protocol.}
Every policy is trained three times, with seeds
42, 100 and 2026, for 8M steps each, and evaluated at its final checkpoint by
greedy rollouts on 256 held-out \emph{confirmation} tasks per benchmark, whose
layouts, card streams and mazes never occur in training. The metrics are doors
per 500 steps on Key-to-Door, exact accuracy on CountRecall (\emph{CR evicted}:
queries whose evidence has left the buffer), the fraction of
the three goals reached on MazeRunner, and success on the T-Maze, each first
at the \emph{training horizon}, the length of a training task. CountRecall and
the T-Maze have fixed-length tasks; Key-to-Door and MazeRunner also run for
4,000 steps, eight times that horizon, counting successes per 500 steps:
in the \emph{same task} the layout or map stays fixed while attempts continue,
and in a \emph{new task} a new layout arrives every 500 steps, or a new maze
with the same action mapping every 1,000, with memory carried.
A change of layout or maze within such a run is not a new outer task in the
sense of Section~\ref{sec:problem}: no memory is reset during the 4,000 steps. We report means over tasks and seeds and
compare policies by paired differences on the same tasks and seeds: a
difference is \emph{consistent} when it exceeds a practical margin $\dlt$
(1 door, 0.05 accuracy, 0.1 goal fraction or one goal per 500 steps, and 0.25
success), its 95\% joint seed/task bootstrap interval
\citep{agarwal2021precipice} excludes zero and all three seeds agree in sign,
and \emph{equivalent} when the whole interval lies within $\pm\dlt$; any
other difference is \emph{no consistent difference}. Bands and whiskers are
95\% intervals; dots mark seeds. More details in Appendices~\ref{app:implementation},
\ref{app:benchmarks} and~\ref{app:learning}.

\subsection{Adaptation at and beyond the training horizon}
\label{sec:results-endpoint}

\textbf{At the training horizon.}
\method{} is competitive at the training horizon and pulls ahead over long
tasks (below). Some baselines score higher in
mean, but none is consistently better: the full-history Transformer reaches
0.998 against 0.989 accuracy on CountRecall, within the equivalence margin,
and Memo reaches a goal fraction of 0.863 against 0.804 on MazeRunner, ahead on
every seed but by a mean gap below $\dlt$. \method{} is equivalent to the
full-history Transformer on CountRecall, MazeRunner and the T-Maze and
consistently better than the GRU on CountRecall and MazeRunner; on Key-to-Door the intervals
are too wide to separate it from the growing memories and the GRU
(Tables~\ref{tab:results} and~\ref{tab:contrasts}). These scores reflect adaptation within a task:
on Key-to-Door, door success rises from about 0.15 on the first attempt to
above 0.8 on the ninth for every memory except \nocarry{}
(Appendix~\ref{app:in-context}). Its state stays fixed while the full-history cache and Memo's store
grow; after 4,000 steps it is about 1\% of the full-history cache
(Figure~\ref{fig:state}).

\label{sec:results-horizon}
\Needspace{14\baselineskip}
\begin{wrapfigure}{r}{0.47\linewidth}
\vspace{-12pt}
\centering
\includegraphics[width=\linewidth]{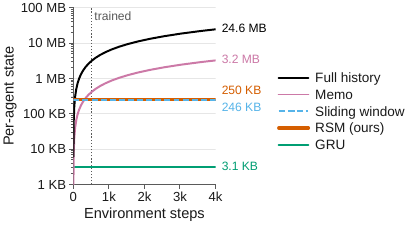}
\vspace{-7pt}
\caption{\textbf{Per-agent state} against task length.}
\label{fig:state}
\vspace{-10pt}
\end{wrapfigure}
\textbf{Over a long task.}
In the same task, \method{} and the GRU sustain their adapted performance over the whole run, \method{} at a higher level and, in the last window, ahead of
every compared memory on every seed (Figure~\ref{fig:at-a-glance}a,b,e,f;
Table~\ref{tab:results}), so its advantage widens as the task runs on. The
growing memories fall from their early-run rate, Memo most steeply, and on the repeated MazeRunner map they collapse towards zero goals;
they also meet longer contexts than in training, while \method{}'s context
stays at 40 slots (Section~\ref{sec:rsm-state}). The causes differ: on one
MazeRunner map, wiping the memory at every attempt prevents Memo's decline but
not the full-history Transformer's, so retained history alone does not explain
the latter (Appendix~\ref{app:extended}).

\begin{wraptable}{r}{0.49\linewidth}
\vspace{-12pt}
\centering
\caption{\textbf{A state-matched sliding window} (the last 40 records per layer,
246.1\,KB per agent).}
\label{tab:window}
\footnotesize
\setlength{\tabcolsep}{2.6pt}
\begin{tabular}{lcccc}
\toprule
 & \multicolumn{2}{c}{Training horizon} & \multicolumn{2}{c}{Last 500 of 4,000} \\
\cmidrule(lr){2-3}\cmidrule(lr){4-5}
 & KD & MR & KD & MR \\
\midrule
\method{} & 29.1 & 0.804 & 46.0 & 26.6 \\
window & 24.1 & 0.750 & 4.9 & 2.6 \\
\bottomrule
\end{tabular}
\vspace{-8pt}
\end{wraptable}

\textbf{A state-matched sliding window.}
The sliding window, with about the same deployment state as \method{}, trails it
consistently on Key-to-Door at the training horizon, where every position was
seen in training, so evidence older than 40 records matters; on MazeRunner the
two show no consistent difference. Over 4,000 steps the window falls to a few
doors or goals per window (Table~\ref{tab:window}), but it then also meets
untrained positions, so this comparison is operational rather than a clean
test of summarisation.

\textbf{After the task changes.}
In the new task (Table~\ref{tab:results};
Figure~\ref{fig:at-a-glance}c,d,g,h), \method{} drops after each change and
recovers once it finds the new layout or map, largely on MazeRunner and in
about a third of Key-to-Door layouts, yet stays above every other memory on
every later layout, map and seed. On Key-to-Door the others average under two
doors per layout; on MazeRunner the GRU and \nocarry{} stay lower and the
growing memories collapse. More details in Appendices~\ref{app:long-protocol}
and~\ref{app:extended}.

\Needspace{16\baselineskip}
\subsection{Do the decisions use the summary?}
\label{sec:results-mechanism}

\begin{figure}[H]
 \centering
 \includegraphics[width=\linewidth]{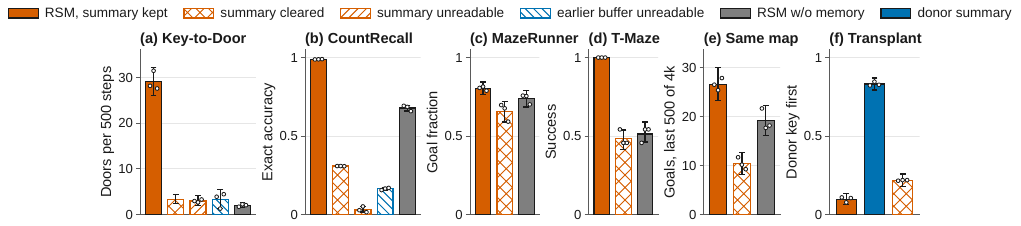}
 \caption{\textbf{Interventions on the frozen \method{} policy.} (a--e) The
summary kept or cleared at every boundary, beside \nocarry{}, and in (a,b) the
summary or the earlier buffer unreadable to the decisions ($\beta=\infty$); (e) MazeRunner
goals in the last 500 of 4,000 steps on one map. (f) Key-to-Door tasks whose
next attempt visits the donor's key first, after a transplant. Dots: seeds.}
 \label{fig:interventions}
\end{figure}

\textbf{Clearing, transplanting and blocking.}
We test causally whether the decisions depend on the carried summary
(Figure~\ref{fig:interventions}). Resetting \method{}'s summary to
$Z_{\rm init}$ at every boundary, with the working buffer intact, removes the
benefit of the summary on every benchmark and every seed, and leaves the policy
well below \nocarry{} on CountRecall and the long MazeRunner task and at its
fixed-turn level on the T-Maze: the trained policy
does not fall back on the buffer-only strategy that \nocarry{} learns
(Figure~\ref{fig:interventions}a--e). To transplant, we replace each
task's summary at one boundary with the summary that a different task, the
\emph{donor}, carried there; behaviour then follows the donor task: on Key-to-Door the next attempt visits the donor's key cell first
in most tasks (Figure~\ref{fig:interventions}f), and on CountRecall the policy
answers with the donor's counts until the next rewrite. As attention weights alone do not explain a decision
\citep{jain2019attention}, we also intervene on them: adding $-\beta$ to every
decision's attention scores on the four summary keys leaves the summary
carried but unreadable at $\beta=\infty$, which reduces \method{} to the level
of \nocarry{} on Key-to-Door and below it on CountRecall
(Figure~\ref{fig:interventions}a,b), with a loss growing with $\beta$.
Blocking the earlier buffer records costs about as much: the decisions depend
on both memories.

\Needspace{24\baselineskip}
\begin{wrapfigure}{r}{0.47\linewidth}
\vspace{-12pt}
\centering
\includegraphics[width=\linewidth]{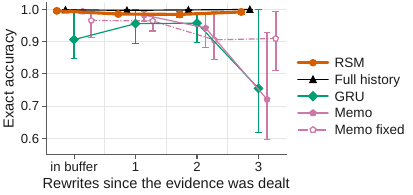}
\vspace{-11pt}
\caption{\textbf{CountRecall retention}: accuracy by the number of summary
rewrites since the queried evidence was dealt.}
\label{fig:retention-age}
\vspace{2pt}
\includegraphics[width=\linewidth]{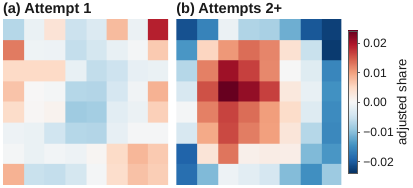}
\vspace{-11pt}
\caption{\textbf{Where the summary is read} in the Key-to-Door room: summary
attention adjusted for segment position, by the agent's cell.}
\label{fig:room-attention}
\vspace{-10pt}
\end{wrapfigure}
\textbf{Retention and Attention.}
On CountRecall, we group queries by how often the summary has been rewritten
since the queried category was first dealt (Figure~\ref{fig:retention-age}).
Queries whose evidence has passed through up to three rewrites are answered
about as accurately as by the full-history Transformer, which keeps every card,
while the GRU and both Memo variants are less accurate in the last segment. A linear
read-out (ridge regression) from the summary recovers every CountRecall count
almost perfectly and the hidden key's position on Key-to-Door well, on every
seed: four tokens hold the task's hidden variables in a readable form. The
decisions also attend to the summary far more than the same slots in \nocarry{};
on Key-to-Door this extra attention is flat across the room on the
first attempt, while the agent searches, and on later attempts is higher in
the room's interior than along its walls (Figure~\ref{fig:room-attention}):
once the agent has something to recall, how much it reads the summary varies
slightly with where it is.
More details in Appendices~\ref{app:interventions}--\ref{app:attention}.

\subsection{Ablations}
\label{sec:ablations}

\textbf{Carrying and training the summary.}
Removing the carried summary, keeping only the 32-record buffer, loses
most of the adaptation (Table~\ref{tab:results}, \nocarry{}), consistently on
every seed on Key-to-Door, CountRecall and the T-Maze, and the loss widens over
the 4,000-step Key-to-Door task. On MazeRunner one 500-step attempt is
largely solvable from the buffer, so the summary adds less than $\dlt$ at the
training horizon, but several goals per window over longer tasks. How the
write is trained also matters: detaching the summary at every boundary
lowers \method{}'s CountRecall accuracy
on every seed (0.914 against 0.989), yet keeps most of the gain over \nocarry{}
(0.680). Training each write through the decisions that later read it thus
improves what is kept.

\textbf{Overwrite versus additive write.}
\mbox{\method{}-R} stays within
$\dlt$ of \method{} in mean at the training horizon and
over the 4,000-step Key-to-Door task, but over many MazeRunner attempts on one
map it declines below \nocarry{} and on the map-change task it
collapses by the fourth map, on every seed, while after a Key-to-Door layout
change it adapts again no better than \nocarry{} (Table~\ref{tab:results}). Wiping its memory at every attempt boundary
preserves its rate, so the decline comes from what it carries. Its summaries
drift in one direction until the first and the last are anti-correlated, at a
norm tens of times larger than \method{}'s: a drift that
Proposition~\ref{prop:summary-range} bounds only for the overwrite.

\textbf{Summary length.}
Varying the summary from one to sixteen tokens on Key-to-Door, with
buffer, backbone and budget fixed, changes adaptation by no consistent
margin (Table~\ref{tab:summary-length}): the paired differences from four tokens change
sign across seeds at the training horizon, and in the last window every length
completes 45 to 46 doors. Even one token shows no consistent difference from four, and a larger
summary buys no further adaptation on this task. More details in
Appendices~\ref{app:representation} and~\ref{app:ablations}.

\begin{table}[H]
\centering
\caption{\textbf{Summary length on Key-to-Door} (4,000-step tasks; 95\% intervals).}
\label{tab:summary-length}
\footnotesize
\setlength{\tabcolsep}{4pt}
\begin{tabular}{lcccc}
\toprule
 & $M=1$ & $M=4$ (\method{}) & $M=8$ & $M=16$ \\
\midrule
State per agent & 209.9\,KB & 249.9\,KB & 303.1\,KB & 409.6\,KB \\
Doors, steps 1--500 & 29.6 [26.5, 33.0] & 29.1 [25.9, 32.4] & 30.8 [28.5, 33.1] & 30.1 [27.8, 32.4] \\
Doors, steps 3,501--4,000 & 45.3 [42.2, 48.2] & 46.0 [43.3, 48.6] & 45.3 [42.5, 48.1] & 45.9 [43.1, 48.7] \\
\bottomrule
\end{tabular}
\end{table}

\newpage

\section{Related Work}
\label{sec:related}

\paragraph{In-context reinforcement learning.}
Recurrent meta-RL first achieved in-context adaptation through hidden-state updates
\citep{duan2016rl,wang2016learning}, and PEARL and VariBAD infer a belief
over the task from context \citep{rakelly2019pearl,zintgraf2020varibad}.
Sequence models extend the idea to longer histories: Algorithm Distillation
imitates the learning histories of RL algorithms \citep{laskin2022context},
Decision-Pretrained Transformers predict optimal actions from interaction
data \citep{lee2023supervised}, and AMAGO and ReLIC learn in context through
off-policy and on-policy updates over long sequences
\citep{grigsby2024amago,elawady2024relic}. How well such agents remember is
harder to measure: memory is easily confounded with exploration and
credit assignment \citep{pasukonis2023memory,ni2023memory,cherepanov2026unraveling},
finite and endless versions of a task can reverse the ranking of
recurrent and Transformer memories \citep{pleines2025memory}, and retaining
information differs from rewriting it when the task changes
\citep{shchendrigin2026memory}. Our evaluation follows these findings: we ran
frozen policies far beyond the training horizon, changed the hidden task
mid-run, and intervened on the summary directly.

\paragraph{Memory for sequence-based control.}
Like recurrent networks, state-space models and gated linear attention
keep a fixed-size state \citep{lu2023structured,pramanik2024agalite}, and
Stable Hadamard Memory calibrates each update to erase what is no longer
needed \citep{le2025stable}. Learned summaries compress long contexts in
language models \citep{chevalier2023adapting,ge2024incontext}, and language-model
and RL memories have also been trained with reconstruction or predictive losses \citep{rae2019compressive,wayne2018unsupervised}.
In RL, RATE and ELMUR carry segment memory offline, ELMUR
detaching it between segments \citep{cherepanov2026recurrent,cherepanov2026elmur},
and Memo, the closest online method, compresses each segment into summary
tokens and backpropagates through them, but keeps every summary block, so its
state grows with the task \citep{gupta2025memo}. Memo also prompts each write
with learned summary embeddings and reports a fixed-size, RMT-style variant,
finding that accumulating summaries better retains relevant past experience; we
instead test whether a fixed-capacity summary, trained with an overwrite,
sustains adaptation far beyond the training horizon. Like Recurrent
Memory Transformers \citep{bulatov2022recurrent}, \method{} carries a fixed
bank of memory tokens between segments. RMTs are trained with token-prediction
losses on language and algorithmic tasks and place the carried memory
at both their read and write positions; \method{} instead writes with learned,
history-independent queries, as Memo does,
overwrites its summary with their projected, normalised outputs, and is
trained online through actor--critic losses.

\section{Discussion and Conclusion}
\label{sec:conclusion}

We introduced Recurrent Summary Memory, a Transformer policy that attends
exactly to a short working buffer and carries a fixed-capacity summary,
overwritten at every segment boundary and trained only through actor--critic
losses. With 249.9\,KB of deployment state per agent, about 1\% of a
full-history cache after 4,000 steps, \method{} remains competitive with
growing memories at the training horizon. Over tasks eight times longer, it
sustains its adapted performance and re-adapts better than the growing and
recurrent memories when the hidden task changes, while the growing memories deteriorate.
Memories that perform alike at the training horizon can thus diverge in
continued operation, so finite-horizon scores alone are an incomplete test of
memory for long-running agents. Clearing, transplanting and blocking the
summary show that the decisions depend on what it carries, and the overwrite
provably bounds the summary at any horizon, whereas an additive write drifts.
These results suggest that a fixed-capacity memory can keep an agent adapting
far beyond its training horizon, provided it is trained to decide what to keep
at every rewrite.

Three limits remain. We trained only on tasks of at most 500 steps; only the
deployment memory is bounded, since each training update replays whole tasks
and backpropagates through every summary write; on Key-to-Door, \method{}
learns more slowly than every other memory that learns the task and, after a
layout change, recovers in only about a third of layouts. 
Future work can also bound training memory, for example by backpropagating through only
the most recent summaries: on CountRecall, cutting the gradient at every
boundary still kept most of the summary's benefit (Section~\ref{sec:ablations}).
Update rules could also learn when to keep or revise evidence over
far longer horizons, and \method{} could be extended to visual and embodied
agents, such as robots running for hours on fixed onboard memory.

\label{end:main}

\subsection*{AI use statement}
In this work, we used generative AI tools to assist with developing and executing training and evaluation scripts. We did not use generative AI tools to generate research ideas; the research ideas were developed independently by the authors. Additionally, we used generative AI tools to polish the manuscript for clarity, concision, and readability. We have reviewed all AI-assisted work, including the manuscript revisions and the training and evaluation scripts. We take responsibility for the final content of this work, including all text, code, claims, and experimental results produced with the aid of generative AI.

\subsection*{Reproducibility statement}
Section~\ref{sec:method} defines the memory operator, and
Appendices~\ref{app:framework}--\ref{app:memory-analysis} provide pseudocode,
the learning objective and the memory analysis.
Appendix~\ref{app:implementation} reports the configurations, training budgets,
task splits and evaluation procedures.
A repository containing the implementation, pinned environment,
experiment configurations, tests, summary evaluation records, and scripts and
notebooks used to regenerate the reported figures and tables is available at
\url{https://github.com/Jiaweeeeeen/RSM-ICLR}.

\bibliography{references}
\bibliographystyle{iclr2027_conference}
\clearpage
\appendix
\noindent\textbf{Appendix overview.} Appendix~\ref{app:implementation}
describes the compared memories, their training settings and compute;
Appendices~\ref{app:framework} and~\ref{app:learning-objective} give
\method{}'s pseudocode and learning objective;
Appendix~\ref{app:memory-analysis} proves the memory bounds and accounts for
every memory's state; Appendix~\ref{app:benchmarks} specifies the benchmarks,
long-running evaluations and statistics; and Appendix~\ref{app:experiments}
reports additional results.

\section{Compared Memories and Training}
\label{app:implementation}

All compared memories receive the same public record: the current observation,
previous action and previous reward, encoded by the same stateless timestep
encoder with no observation bypass. Each agent carries its own
memory state and resets it only at an outer-task boundary. The Transformers
have width 256, three layers and eight heads, and the GRU matches their width
and depth; \method{} uses $C=32$ and $M=4$. Within each benchmark, every
memory is trained by the same learner (Appendix~\ref{app:learning-objective})
with the same replay, exploration and update schedule
(Appendix~\ref{app:training}), in FP32\@. Table~\ref{tab:cells} lists the
compared memories with their identifiers in our code and their parameter
counts, which range over the four benchmarks because their input and action
sizes differ; the GRU has about 1.3M fewer parameters than the Transformers.

\begin{table}[H]
\centering\scriptsize
\caption{Compared memories, their code identifiers, what each carries
forward and its parameter count over the benchmarks it runs on.}
\label{tab:cells}
\begin{tabular}{lllr}
\toprule
Memory & Code identifier & Carried forward & Parameters \\
\midrule
\method{} (ours) & \texttt{raw\_summary} & $P_\theta(W)$ at each boundary & 3.18--3.39M \\
Full-history Transformer & \texttt{full\_context} & Every record's keys and values & 3.11--3.32M \\
GRU & \texttt{full\_gru} & Recurrent hidden state & 1.82--2.03M \\
Memo & \texttt{memo} & Every summary block; jittered segments & 3.11--3.32M \\
Memo, fixed segments & \texttt{memo\_fixed} & Every summary block & 3.11--3.32M \\
Sliding window & \texttt{raw\_window} & The last $W=40$ records per layer & 3.12M \\
\nocarry{} & \texttt{raw\_segment} & $Z_{\rm init}$ at each boundary & 3.18--3.39M \\
\method{}-R & \texttt{raw\_summary\_residual} & $Z+P_\theta(W)$ at each boundary & 3.18--3.39M \\
\method{} detach & \texttt{raw\_summary\_detach} & $P_\theta(W)$, gradient stopped & 3.39M \\
\bottomrule
\end{tabular}
\end{table}

\subsection{Memo adaptation}
\label{app:memo}

Our comparator follows the author-linked Memo implementation at revision
\texttt{9e7044f}, adapted to the shared backbone and learner. Each segment
produces four summary tokens. Memo samples a segment length from 26--38
once per learner forward and uses 32 during interaction; the fixed-segment
variant uses 32 in both settings. Both retain all earlier summary blocks
and propagate gradients through them. This comparison matches the present
study's conditions rather than reproducing Memo's published task,
architecture, and budget in full.

\subsection{Sliding window}
\label{app:window}

The sliding window runs the full-history Transformer's backbone under a
banded causal mask: every layer attends only to the last $W=40$ records,
through a rolling cache of 40 slots per layer during interaction. $W=40$ is
the largest window whose state fits within \method{}'s per-agent state
(Appendix~\ref{app:state-others}). Its records keep their absolute positions
in the task, so beyond the 500-step training tasks it meets positions it never
trained on. It runs on Key-to-Door and MazeRunner with the same learner,
8M-step interaction budget and seeds as every other memory.

\subsection{RSM embeddings and controls}
\label{app:embeddings}

\paragraph{Token embeddings.}
Let $p_k$ be the fixed sinusoidal embedding of slot $k$, $s_j$ a learned
embedding of memory row $j$, and $\rho^{\rm R},\rho^{\rm X},\rho^{\rm W}$
the learned role embeddings. Before embedding dropout, the rows are
\begin{align}
 \widetilde z_{b,j}&=Z_b[j]+s_j+\rho^{\rm R}+p_{j-1},
 &&1\leq j\leq M,\nonumber\\
 \widetilde e_{b,i}&=A_\theta(e_{b,i})+\rho^{\rm X}+p_{M+i-1},
 &&1\leq i\leq C,\nonumber\\
 \widetilde q_j&=Q^{\rm W}[j]+s_j+\rho^{\rm W}+p_{M+C+j-1},
 &&1\leq j\leq M.
 \label{eq:rsm-embeddings}
\end{align}
The input projection is $A_\theta$. READ and WRITE rows share memory-slot
embeddings but use distinct roles and positions. Both $Z_{\rm init}$
and $Q^{\rm W}$ are learned; the affine map
$P_\theta:\R^d\to\R^d$ projects each final-normalised WRITE output.
Parameters are shared across segments, and training retains gradients
through the carried summary.

\paragraph{Controls.}
\nocarry{} preserves \method{}'s writer modules and boundary schedule, but
discarded writes receive no gradient from later segments. This separately
trained control differs from clearing the summary of a frozen \method{}
policy at evaluation.

\subsection{Training settings and compute}
\label{app:training}

Training runs 1,000 epochs; each collects 500 steps from each of 16 agents and
then takes 128 gradient updates, for 8M environment steps, reset-only steps
included. Saved counters verify this budget. Replay retains complete outer
tasks, and before each collection block the agents' states are rebuilt under
the current weights (Appendix~\ref{app:framework}). On MazeRunner, following
AMAGO's MazeRunner recipe, replay also relabels goals in hindsight for every
memory: a replayed task has a random number of its unreached goals replaced
by positions the agent reached, and its rewards recomputed; the other
benchmarks use no relabeling. Every 50 epochs the greedy
policy is scored on the 64 development tasks (Appendix~\ref{app:learning}); the
reported results use the final checkpoint, with no checkpoint selection.
Table~\ref{tab:hyper} lists the settings. Exploration anneals over each
agent's first 50,000 steps, as in AMAGO: $\epsilon$-greedy from 1.0 to 0.05;
Key-to-Door's bilevel schedule also lowers $\epsilon$ within each task, from
1.0 to 0.8 at the start of training and from 0.05 to 0.01 once annealed; and
the T-Maze schedule holds $\epsilon$ at $0.5/L$ inside the corridor and uses
the annealed value at the first decision and at the last decisions before the
junction.

\begin{table}[H]
\centering\small
\caption{Training settings, shared by every memory within a benchmark.}
\label{tab:hyper}
\begin{tabular}{lcccc}
\toprule
 & Key-to-Door & CountRecall & MazeRunner & T-Maze \\
\midrule
Optimiser & \multicolumn{4}{c}{AdamW; learning rate $10^{-4}$; weight decay $10^{-3}$} \\
Warm-up; gradient-norm clip & \multicolumn{4}{c}{128 updates; 1.0} \\
Epochs $\times$ steps $\times$ agents & \multicolumn{4}{c}{$1{,}000\times500\times16=8$M environment steps} \\
Updates per epoch; batch & \multicolumn{4}{c}{128; 16 complete tasks} \\
Replay capacity & \multicolumn{4}{c}{10,000 complete tasks} \\
Longest training task (steps) & 500 & 103 & 500 & 257 \\
Target-network rate $\tau$ & 0.003 & 0.0025 & 0.003 & 0.0025 \\
Exploration ($\epsilon$-greedy) & bilevel & standard & standard & T-Maze \\
Hindsight goal relabeling & no & no & yes & no \\
\bottomrule
\end{tabular}
\end{table}

\paragraph{Compute.}
The paper reports 102 fits: seven memories on four benchmarks, the sliding
window on two, \method{} detach and three further summary lengths, each with
three seeds. Every fit trained in FP32 on one GPU at a time, on NVIDIA RTX
3090, RTX A6000, A30, L40S, RTX 6000 Ada, RTX PRO 6000 Blackwell and A100
GPUs, and on RTX 2080 Ti GPUs for the $M=1$ arm. The records hold at least 880
fit-hours of training time, excluding evaluation. Seventy-two fits were
resumed at least once, often on another GPU model, and several fits sometimes
shared one GPU\@. These durations are therefore not hardware-normalised and are
not used to compare training efficiency across memories.

\begingroup
\raggedbottom
\section{RSM Pseudocode}
\label{app:framework}

Algorithm~\ref{alg:rsm-step} implements cached interaction for one agent;
Algorithm~\ref{alg:rsm-dense} gives dense training and agent-state
reconstruction. Both encode the public record $u_t=(o_t,a_{t-1},r_t)$
with the stateless map $e_t=f_\phi(u_t)$. State
$s=(Z,\mathcal K,\ell,b)$ contains the summary, per-layer key/value
caches of $C+2M$ slots, the occupied-slot counter, and the segment counter
(the number of completed writes).
$\operatorname{Append}_\theta$ processes embedded rows in causal order
(Equation~\ref{eq:rsm-embeddings}), updates the caches and slot count,
and returns final-normalised outputs. The boundary operator selects the
write rule:
\begin{equation}
 \operatorname{Update}_{\omega,\theta}(Z,W)=
 \begin{cases}
 P_\theta(W), & \omega=\mathrm{overwrite}\quad\text{(\method{})},\\
 Z+P_\theta(W), & \omega=\mathrm{residual}\quad\text{(\method{}-R)},\\
 Z_{\rm init}, & \omega=\mathrm{no\ carry}\quad\text{(\nocarry{})}.
 \end{cases}
 \label{eq:pseudocode-update}
\end{equation}

\begin{algorithm}[!htbp]
\caption{\method{} interaction with lazy boundary writes (one agent)}
\label{alg:rsm-step}
\small
\begin{algorithmic}[1]
\Require Parameters $\theta$ (including $\phi$), rule $\omega$, capacities $C,M$
\Function{ResetAgent}{}
  \State \Return $(Z_{\rm init},\Call{EmptyCache}{C+2M},0,0)$
\EndFunction
\Function{Advance}{$u,s=(Z,\mathcal K,\ell,b)$}
  \If{$\ell=0$} \Comment{The READ pass is pending}
    \State $(-,\mathcal K,\ell)\gets\operatorname{Append}_\theta(\mathcal K,\ell,\operatorname{READ}(Z))$
  \EndIf
  \If{$\ell=M+C$} \Comment{Write before admitting the next record}
    \State $(W,\mathcal K,\ell)\gets\operatorname{Append}_\theta(\mathcal K,\ell,\operatorname{WRITE}(Q^{\rm W}))$
    \State $Z\gets\operatorname{Update}_{\omega,\theta}(Z,W)$
    \State $\mathcal K\gets\Call{EmptyCache}{C+2M}$; $\ell\gets0$; $b\gets b+1$
    \State $(-,\mathcal K,\ell)\gets\operatorname{Append}_\theta(\mathcal K,\ell,\operatorname{READ}(Z))$
  \EndIf
  \State $e\gets f_\phi(u)$
  \State $(m,\mathcal K,\ell)\gets\operatorname{Append}_\theta(\mathcal K,\ell,\operatorname{RECORD}(e,\ell))$
  \State \Return $m,(Z,\mathcal K,\ell,b)$
\EndFunction
\end{algorithmic}
\end{algorithm}

The action is selected from the RECORD representation $m$; READ and WRITE
outputs produce no actions or direct RL losses. A segment that has just
received its $C$th record keeps its write pending until the next record
arrives. \textsc{ResetAgent} is called only at an outer-task reset, so
physical-attempt resets preserve the state.

\begin{algorithm}[!htbp]
\caption{Dense \method{} unroll and agent-state rebuild}
\label{alg:rsm-dense}
\small
\begin{algorithmic}[1]
\Require Parameters $\theta$, rule $\omega$
\Function{DenseUnroll}{$U=(u_0,\ldots,u_{N-1}),v$} \Comment{$v$: record validity}
  \State $E\gets f_\phi(U)$; $Z\gets Z_{\rm init}$; $O\gets[\,]$
  \State Right-pad $E$ and $v$ to a multiple of $C$; padded records have $v=0$
  \For{each consecutive $C$-record block $(E_b,v_b)$}
    \State $X\gets[\operatorname{READ}(Z)\mid\operatorname{RECORD}(E_b)\mid
                   \operatorname{WRITE}(Q^{\rm W})]$
    \State $H\gets\operatorname{Norm}_\theta(\operatorname{Transformer}_\theta(X))$
      \Comment{Causal; padded records masked as keys}
    \State Append the RECORD rows $H^{\rm R}$ to $O$
    \State $Z\gets\operatorname{Update}_{\omega,\theta}(Z,H^{\rm W})$ \Comment{Keep the gradient path}
  \EndFor
  \State \Return $v\odot O_{0:N-1},Z$
\EndFunction
\Statex
\Function{Rebuild}{$U^{\rm proc}=(u_0,\ldots,u_{n-1})$} \Comment{No gradients}
  \State $s\gets\Call{ResetAgent}{}$
  \If{$n=0$} \State \Return $s$ \EndIf
  \State $q\gets\lfloor(n-1)/C\rfloor$ \Comment{Segments whose writes have occurred}
  \If{$q>0$}
    \State $(-,Z)\gets\Call{DenseUnroll}{U^{\rm proc}_{0:qC-1},\mathbf 1_{qC}}$;
      $s\gets(Z,\Call{EmptyCache}{C+2M},0,q)$
  \EndIf
  \For{$t=qC,\ldots,n-1$}
    \State $(-,s)\gets\Call{Advance}{u_t,s}$ \Comment{Re-encode the open segment}
  \EndFor
  \State \Return $s$
\EndFunction
\end{algorithmic}
\end{algorithm}

\paragraph{Attention masks.} All layers use ordinary causal content
attention. READ rows attend to preceding and self READ rows; RECORD rows
attend to the READ rows and to preceding and self RECORD rows; WRITE rows
attend to the READ rows, all valid RECORD rows, and preceding and self WRITE
rows. $C$ RECORD slots are reserved even for a partial segment, and padded
records are masked as attention keys.

\paragraph{Boundaries and deferred writes.} A boundary occurs after $C$
admitted records, counting the initial observation and valid reset
observations. Cached execution defers the write until the next record
arrives: the full segment remains in state, then the model writes, clears
the cache, and loads the new READ tokens before admitting that record.
Ending an outer task alone triggers no write. Attempt resets preserve
memory; an outer-task reset clears the cache and restores $Z_{\rm init}$
independently for each agent. Reset-only steps keep the interface's reward
and transition semantics and are not physical actions.

\paragraph{Training and rebuild.} Dense training reconstructs complete replayed tasks under current weights
and propagates the actor--critic loss
(Appendix~\ref{app:learning-objective}) through every carried write.
Only valid recorded transitions contribute to the loss; a final write with
no following segment has no downstream decision loss. Padding preserves
WRITE positions and is masked as attention keys.

Before collection resumes, \textsc{Rebuild} runs without gradients in
evaluation mode on $U^{\rm proc}$, the records consumed since the agent's
last outer-task reset. This prefix excludes the pending observation.
Using $q=\lfloor(n-1)/C\rfloor$ preserves the deferred write when the
last processed segment is full. The resulting state matches cached
execution under current weights. Rebuilding avoids replaying stored recurrent states,
which become stale as the weights change \citep{kapturowski2019recurrent}. Frozen-policy evaluation uses only
Algorithm~\ref{alg:rsm-step}, with no replay or reconstruction.

\par\endgroup

\section{Off-Policy Learning Objective}
\label{app:learning-objective}

All conditions use AMAGO v3.4.0's scalar actor--critic learner
\citep{grigsby2024amago}. The actor and four critic heads share the timestep
and trajectory encoders. Losses average over valid replay positions and
discount heads, with padding masked. Slowly updated target networks are
maintained for the actor and critic heads.

\paragraph{Critic regression.}
For discount $\gamma$, let $Q_{\psi,i}^\gamma$ denote critic $i$ in scaled
reward units and
$q_{\psi,i}^\gamma=(Q_{\psi,i}^\gamma-\mu_\gamma)/\sigma_\gamma$ its
PopArt-normalised value \citep{vanhasselt2016learning}. With reward scale
$\alpha_r$ and a pair of target-critic indices $\mathcal I_{t,\gamma}$
sampled uniformly with replacement, the one-step target is
\begin{equation}
 y_t^\gamma=\alpha_r r_{t+1}+\gamma(1-d_{t+1})
 \min_{i\in\mathcal I_{t,\gamma}}
 \sum_{a'\in\mathcal A}\bar\pi^\gamma(a'\mid m_{t+1})
 Q_{\bar\psi,i}^\gamma(m_{t+1},a'),
 \label{eq:amago-target}
\end{equation}
where bars denote slowly updated target heads, and the critic loss is
\begin{equation}
 \mathcal L_Q=
 \E_{\tau,t,\gamma,i}
 \left[\left(q_{\psi,i}^\gamma(m_t,a_t)-
 \operatorname{sg}\!\left(
 \frac{y_t^\gamma-\mu_\gamma}{\sigma_\gamma}\right)\right)^2\right].
 \label{eq:amago-critic}
\end{equation}
The operator $\operatorname{sg}$ stops gradients. Only the outer-task
terminal $d$ stops bootstrapping; an attempt reset preserves future value.

\paragraph{Policy improvement.}
With the current policy value under critic $i$ and the ensemble-mean
advantage of a replay action,
\begin{align}
 v_{i,t}^\gamma
 &=\sum_{a\in\mathcal A}\pi_\theta^\gamma(a\mid m_t)
 q_{\psi,i}^\gamma(\operatorname{sg}(m_t),a),\nonumber\\
 A_t^\gamma
 &=\frac{1}{4}\sum_{i=1}^4
 \left[q_{\psi,i}^\gamma(m_t,a_t)-v_{i,t}^\gamma\right],
 \label{eq:amago-advantage}
\end{align}
AMAGO combines value maximisation with behaviour cloning filtered by a
positive advantage \citep{wang2020critic}:
\begin{equation}
 \mathcal L_\pi=-\E_{\tau,t,\gamma}
 \left[c_Q\min_i v_{i,t}^\gamma+
 c_{\rm BC}\operatorname{sg}\!\left(\mathbf 1\{A_t^\gamma>0\}\right)
 \log\pi_\theta^\gamma(a_t\mid m_t)\right].
 \label{eq:amago-actor}
\end{equation}
We optimise $\mathcal L=\mathcal L_\pi+\lambda_Q\mathcal L_Q$ with
$c_Q=1$, $c_{\rm BC}=0.1$, $\lambda_Q=10$, and $\alpha_r=100$.
Reward scaling applies to learning targets; policy feedback and evaluation
metrics use native, unscaled rewards.

\paragraph{Discount heads.}
The learner uses seven discount heads,
$\Gamma=\{0.1,\allowbreak 0.9,\allowbreak 0.95,\allowbreak 0.97,\allowbreak 0.99,\allowbreak 0.995,\allowbreak 0.999\}$, and the policy acts with
the $\gamma=0.999$ head. Both losses backpropagate through the shared
causal encoder over complete outer tasks, including every carried write;
READ and WRITE slots have no separate loss.

\paragraph{Gradient through carried summaries.}
Let $\mathcal L_b$ be the part of $\mathcal L$ on the decisions of segment
$b$ and $\mathcal L_{>b}=\sum_{b'>b}\mathcal L_{b'}$. Because $\mathcal L_{b+1}$
reads $Z_{b+1}$ directly and every later loss reads it only through
$Z_{b+2}$, the gradient that reaches a summary obeys
\begin{equation}
 \frac{\partial\mathcal L_{>b}}{\partial Z_{b+1}}
 =\frac{\partial\mathcal L_{b+1}}{\partial Z_{b+1}}
 +\Bigl(\frac{\partial Z_{b+2}}{\partial Z_{b+1}}\Bigr)^{\!\top}
 \frac{\partial\mathcal L_{>b+1}}{\partial Z_{b+2}},
 \label{eq:summary-gradient}
\end{equation}
and trains the writer of segment $b$ through $Z_{b+1}=P_\theta(W_b)$. Under
the overwrite,
$\partial Z_{b+2}/\partial Z_{b+1}=\partial P_\theta(W_{b+1})/\partial Z_{b+1}$
passes only through attention to the READ rows; under \method{}-R it is
$I+\partial P_\theta(W_{b+1})/\partial Z_{b+1}$.

\section{Memory Bounds}
\label{app:memory-analysis}

\subsection{Deployment-state accounting}
\label{app:cost}

\begin{proof}[Proof of Proposition~\ref{prop:bounded-state}]
At reset, each agent stores $Md$ summary values and per-layer key/value
slabs with $C+2M$ slots. A READ pass uses $M$ slots, followed by at most
$C$ RECORD slots. Before the next record, $M$ WRITE queries occupy the
reserved final slots and produce the replacement summary. Clearing the
cache and loading that summary restores the same layout. Induction over
records and boundaries gives $Md+2L(C+2M)d$ retained floating-point
values, regardless of the number of segments. FP32 storage and the
32-bit and 64-bit counters yield Equation~\ref{eq:state-bound}, within
the supported counter range. Residual and no-carry updates change values
but preserve this allocation.
\end{proof}

For $C=32$, $M=4$, $d=256$, and $L=3$, the per-agent allocation is
\begin{equation}
 4(4\cdot256)+4(2\cdot3\cdot40\cdot256)+12
 =249{,}868\ \text{bytes}.
\end{equation}
One detached FP32 copy of $Z_{\rm init}$, $4Md=4{,}096$ bytes, is shared
across the agent batch, so $N$ agents retain $N B_{\rm deploy}+4Md$ tensor
bytes, excluding parameters, temporary computation, replay, and
training-time prefixes and gradients.

\subsection{State of the compared memories}
\label{app:state-others}

Table~\ref{tab:state} gives the state that each memory keeps per agent
between decisions, counted as for Proposition~\ref{prop:bounded-state}: the
history-dependent tensors at FP32 with their counters, excluding parameters
and workspace, at their peak within a task of $T$ steps. Table~\ref{tab:results}
and Figure~\ref{fig:state} report these peaks, measured at $T=4{,}000$. The
fixed-size memories keep the same state at every step; the full history and
Memo grow with every record or segment.

\begin{table}[H]
\centering\small
\caption{Per-agent state of each memory ($L=3$, $d=256$, FP32).}
\label{tab:state}
\begin{tabular}{lp{5.4cm}r}
\toprule
Memory & State kept between decisions & Peak, $T=4{,}000$ \\
\midrule
\method{}, \method{}-R, \nocarry{} & Summary, buffer caches and counters (Equation~\ref{eq:state-bound}) & 249.9\,KB \\
Sliding window & Keys and values of $W$ records per layer, their positions and a length: $8LWd+8W+4$ bytes & 246.1\,KB \\
GRU & Hidden state of three layers: $12d$ bytes & 3.1\,KB \\
Memo, both variants & Keys and values of four tokens per completed segment and of the open segment & 3.24\,MB \\
Full-history Transformer & Keys and values of every record and a length: $8L(T+1)d+4$ bytes & 24.58\,MB \\
\bottomrule
\end{tabular}
\end{table}

\subsection{Bounded summary values}
\label{app:summary-range}

\begin{proposition}[Bounded summary values]
\label{prop:summary-range}
Fix $\theta$, and let
$\nu=\sup_w\|P_\theta(w)\|_2$ over final-normalised rows $w$, which is
finite because the final normalisation is a LayerNorm
\citep{ba2016layer}. For every task, every $b\geq1$ and every row
$j\in\{1,\ldots,M\}$,
\begin{equation}
 \|Z_b[j]\|_2\leq\nu\quad\text{(\method{})},\qquad
 \|Z_b[j]\|_2\leq\|Z_{\rm init}[j]\|_2+b\,\nu\quad\text{(\method{}-R)}.
 \label{eq:summary-range}
\end{equation}
\end{proposition}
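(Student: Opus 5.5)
The plan has two parts: first show that $\nu$ is finite, then prove each inequality in \eqref{eq:summary-range} by induction over boundaries.

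\textbf{Finiteness of $\nu$.} The final normalisation is a LayerNorm with fixed learned gain $g$ and bias $\beta$. For any input $h\in\R^d$, the standardised vector $\hat h=(h-\bar h\1)/\sqrt{\sigma^2(h)+\epsilon}$ has empirical variance $\sigma^2(h)/(\sigma^2(h)+\epsilon)<1$. Hence $\|\hat h\|_2^2<d$, and every final-normalised row
\[
 w=g\odot\hat h+\beta
\]
lies in the compact set
\[
 K=\{\,g\odot x+\beta:\ \|x\|_2\leq\sqrt d\,\}.
\]
Since $P_\theta(w)=Aw+c$ is affine and $\theta$ is fixed, the triangle inequality gives
\[
 \|P_\theta(w)\|_2\leq\|A\|_{\rm op}\bigl(\|g\|_\infty\sqrt d+\|\beta\|_2\bigr)+\|c\|_2<\infty
\]
for all $w\in K$. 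Therefore $\nu<\infty$. This is the only step with real content, and I expect it to be the main obstacle. The point is that the bound must not depend on the pre-normalisation activations, which can be arbitrarily large because they depend on $Z_b$ and on arbitrary inputs. LayerNorm's scale invariance, together with $\epsilon\geq0$, gives exactly the uniform control needed. If the implementation uses RMSNorm instead, the same argument applies with $\|\hat h\|_2\leq\sqrt d$.

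\textbf{Overwrite.} For every $b\geq1$, Equation~\ref{eq:write} gives $Z_b=P_\theta(W_{b-1})$, applied row-wise. Each row $W_{b-1}[j]$ is a final-normalised WRITE output, whatever $Z_{b-1}$, the records and the task are. So $\|Z_b[j]\|_2\leq\nu$ directly, with no induction needed. The statement starts at $b\geq1$ because $Z_0=Z_{\rm init}$ is not a write output.

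\textbf{Residual.} Here I induct on $b$, with base case $b=0$ giving $\|Z_0[j]\|_2=\|Z_{\rm init}[j]\|_2$. For the step, $Z_{b+1}[j]=Z_b[j]+P_\theta(W_b[j])$. By the triangle inequality,
\[
 \|Z_{b+1}[j]\|_2\leq\|Z_b[j]\|_2+\nu\leq\|Z_{\rm init}[j]\|_2+(b+1)\nu .
\]
This holds for every task and every row because $\nu$ is a supremum over all normalised rows.

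Finally, I would note that deferred writes (Algorithm~\ref{alg:rsm-step}) and dense unrolls (Algorithm~\ref{alg:rsm-dense}) produce the same sequence $Z_b$ under the same update rule. The bound therefore covers both execution modes.
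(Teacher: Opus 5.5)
Your proof is correct and is essentially the paper's: the same LayerNorm computation $\|\hat h\|_2^2=d\,\sigma^2/(\sigma^2+\epsilon)<d$ gives the same explicit bound on $\nu$, the overwrite case follows directly from $Z_b=P_\theta(W_{b-1})$, and your induction for the residual rule is the paper's telescoped sum $Z_b=Z_{\rm init}+\sum_{b'<b}P_\theta(W_{b'})$ followed by the triangle inequality. One small slip: the strict inequality needs $\epsilon>0$, as the paper assumes; with $\epsilon=0$ you only get $\|\hat h\|_2=\sqrt d$ and constant rows are undefined, though your set $K$ with $\|x\|_2\leq\sqrt d$ already covers the non-strict case.
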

Under the overwrite, the READ input thus stays in one bounded set however
many boundaries a task crosses. The residual rule has only a
boundary-dependent bound, which grows with every write. These norm bounds
concern the summary's values, not what they encode or how the policy uses
them.

\begin{proof}[Proof of Proposition~\ref{prop:summary-range}]
The final LayerNorm maps a row $x\in\R^d$ to $w=\eta\odot\hat x+\zeta$,
with gain $\eta$, bias $\zeta$ and
$\hat x_k=(x_k-\bar x)/\sqrt{s^2+\epsilon}$, where $\bar x$ and $s^2$ are
the row's mean and (biased) variance and $\epsilon>0$. Since
$\sum_k(x_k-\bar x)^2=d s^2$, we have
$\|\hat x\|_2^2=d s^2/(s^2+\epsilon)<d$, so
$\|w\|_2\leq\sqrt d\,\|\eta\|_\infty+\|\zeta\|_2$ for every input. Writing
$P_\theta(w)=A_{\rm P}w+c_{\rm P}$ on each row,
\begin{equation}
 \nu\leq\|A_{\rm P}\|_2\bigl(\sqrt d\,\|\eta\|_\infty+\|\zeta\|_2\bigr)
 +\|c_{\rm P}\|_2<\infty.
\end{equation}
Under the overwrite, $Z_b[j]=P_\theta(W_{b-1}[j])$ for $b\geq1$, so
$\|Z_b[j]\|_2\leq\nu$. Under the residual rule,
$Z_b=Z_{\rm init}+\sum_{b'=0}^{b-1}P_\theta(W_{b'})$, and the triangle
inequality gives $\|Z_b[j]\|_2\leq\|Z_{\rm init}[j]\|_2+b\,\nu$. Both
bounds hold for every input sequence, so they apply at every horizon.
\end{proof}

\section{Evaluation Protocol}
\label{app:benchmarks}

Every policy observes the fields listed below together with its previous
action and reward; task identities, hidden layouts and evaluator labels
never reach it. Training tasks are drawn from identities $0$--$999{,}999$,
development tasks from $1{,}000{,}000$--$1{,}000{,}063$ and the 256
confirmation tasks from $5{,}000{,}000$--$5{,}000{,}255$, disjoint from
each other. Every reported result is one greedy rollout per confirmation task
at the final checkpoint of each run, and nothing is retrained or tuned for the
long-running evaluations.

\subsection{Benchmarks}
\label{app:tasks}

\paragraph{Dark Key-to-Door.} The start, key and door cells of an
$8\times8$ room are hidden and fixed for the outer task; the agent observes
its position, whether it holds the key, and the attempt clock. A task lasts
500 environment steps and holds attempts of at most 50 actions separated by
reset-only steps, with memory kept across attempts. The metric is the
number of doors completed in 500 steps ($\dlt=1$ door).

\paragraph{CountRecall.} A stream of 104 cards from four categories is
dealt in a hidden order; the agent observes the dealt and the queried
category (one-hot) and a timer, and answers 103 scored queries for the
running count of the queried category. The metric is exact accuracy
($\dlt=0.05$). Retention is read by the number of summary rewrites since
the queried category was first dealt.

\paragraph{MazeRunner.} The walls of a $15\times15$ maze and the
permutation of its five actions are hidden; the agent observes its
position, the distances to the four surrounding walls, a timer and the
coordinates of three ordered goals, all scaled by the maze size. A task is
one episode of at most 500 steps that ends at the third goal or at the
timer. The metric is the fraction of goals reached ($\dlt=0.1$; one goal
per 500 steps on the long-running tasks).

\paragraph{Passive T-Maze.} A cue at the first step fixes the rewarded turn
at the junction of a corridor of length $L$; the agent observes the
cue only then; afterwards it observes a flag that marks the junction. Training draws $L$ from the multiples
of 32 between 32 and 256 and allows $L+1$ decisions, with a $-1/128$
penalty for a non-forward action before the junction. The metric is
success at $L=128$ ($\dlt=0.25$).

\subsection{Long-running evaluations}
\label{app:long-protocol}

Each frozen policy keeps acting in one task for 4,000 steps with its memory
carried throughout, and we count its successes in each 500-step window, so a
longer task confers no advantage from extra opportunity alone. In the
\emph{same task}, the Key-to-Door layout stays fixed while attempts continue,
and one MazeRunner map is replayed over repeated attempts: after the third
goal or the 500-step timer, the agent restarts on the same maze. A
MazeRunner training task holds a single attempt, so on MazeRunner every
attempt after the first, in both long-running protocols, is new to every
memory, whereas Key-to-Door trains with repeated attempts. In a
\emph{new task inside the run}, a new Key-to-Door layout is drawn at the first
attempt boundary after every 500 steps, and a new MazeRunner maze and goal
sequence, with the same action permutation, at the first attempt boundary
after every 1,000 steps; the new maps are drawn from identities
$6{,}000{,}000$--$6{,}999{,}999$. Wiping a memory at every attempt boundary
isolates what it carries from one attempt to the next.

\subsection{Statistics and thresholds}
\label{app:statistics}

We report means over tasks and seeds. A paired difference between two
memories is formed on each task and seed; its 95\% interval comes from a joint
bootstrap that resamples the three seeds and the tasks with replacement, 2,000
times \citep{agarwal2021precipice}, and we also report the difference within
each seed. CountRecall resamples whole streams, never the queries within one.
A difference is \emph{consistent} when its mean exceeds the benchmark's
margin $\dlt$ in magnitude, its joint interval excludes zero and all three
seeds agree in sign,
\emph{practically equivalent under this rule} when the whole interval lies
within $\pm\dlt$, and otherwise no consistent difference. These are
descriptive reporting labels, not additional hypothesis tests.
Tables~\ref{tab:contrasts} and~\ref{tab:contrasts-long} list the
paired differences between \method{} and the other memories behind the
comparisons of Section~\ref{sec:experiments}, at the training horizon, over
long tasks, after task changes and across summary lengths, each with its
interval, its value on each seed and its verdict. The margins are practical
reporting thresholds rather than significance levels: one event per 500-step
window on Key-to-Door and long-running MazeRunner, five accuracy points on
CountRecall, ten percentage points of the three-goal fraction on MazeRunner,
and 25 percentage points of success on T-Maze. The development thresholds of
Figure~\ref{fig:learning} are 20
doors per task on Key-to-Door, 0.9 exact accuracy on CountRecall, a goal
fraction of 0.5 on MazeRunner (descriptive only) and T-Maze success of 1.0
held to the end of training, each on the 64 development tasks. They summarise
learning speed and are not used for checkpoint selection.

\begin{table}[!tp]
\centering
\caption{\textbf{Paired differences at the training horizon}, \method{} minus each memory on the same tasks
and seeds: mean, 95\% joint seed/task interval, the value on each seed
(42 / 100 / 2026) and the verdict of Appendix~\ref{app:statistics}. T-Maze: every
memory reaches 1.000 on every seed (practically equivalent under our rule), except \nocarry{}
($+0.486$ $[+0.410, +0.536]$; consistent). At the Key-to-Door and CountRecall training
horizon, \method{}-R differs from \method{} by less than $\dlt$ in mean (Table~\ref{tab:results}), so
neither difference is consistent.}
\label{tab:contrasts}
\footnotesize
\setlength{\tabcolsep}{2.5pt}
\begin{tabular}{lcccl}
\toprule
\method{} minus & Mean & 95\% interval & Seeds & Verdict \\
\midrule
\multicolumn{5}{l}{\emph{Key-to-Door, training horizon} ($\dlt=1$)} \\
Full-history Transformer & $+1.6$ & $[-1.3, +4.7]$ & $-0.5$ / $+3.8$ / $+1.6$ & no consistent difference \\
GRU & $-0.5$ & $[-5.0, +3.3]$ & $-4.9$ / $+0.4$ / $+2.8$ & no consistent difference \\
Memo & $-0.5$ & $[-2.8, +1.7]$ & $+0.5$ / $-0.3$ / $-1.8$ & no consistent difference \\
Memo, fixed segments & $+2.3$ & $[-1.4, +5.5]$ & $-1.1$ / $+3.9$ / $+4.1$ & no consistent difference \\
Sliding window & $+4.9$ & $[+1.4, +9.0]$ & $+2.6$ / $+3.1$ / $+9.1$ & consistent \\
\nocarry{} & $+27.0$ & $[+24.4, +29.8]$ & $+26.4$ / $+29.2$ / $+25.5$ & consistent \\
\midrule
\multicolumn{5}{l}{\emph{CountRecall, training horizon} ($\dlt=0.05$)} \\
Full-history Transformer & $-0.009$ & $[-0.012, -0.007]$ & $-0.010$ / $-0.010$ / $-0.008$ & equivalent \\
GRU & $+0.059$ & $[+0.017, +0.095]$ & $+0.016$ / $+0.065$ / $+0.096$ & consistent \\
Memo & $+0.033$ & $[+0.025, +0.042]$ & $+0.043$ / $+0.032$ / $+0.024$ & equivalent \\
Memo, fixed segments & $+0.045$ & $[+0.037, +0.057]$ & $+0.058$ / $+0.038$ / $+0.039$ & no consistent difference \\
\nocarry{} & $+0.309$ & $[+0.292, +0.330]$ & $+0.295$ / $+0.303$ / $+0.331$ & consistent \\
\method{} detach & $+0.075$ & $[+0.062, +0.088]$ & $+0.078$ / $+0.088$ / $+0.060$ & consistent \\
\midrule
\multicolumn{5}{l}{\emph{MazeRunner, training horizon} ($\dlt=0.1$)} \\
Full-history Transformer & $+0.067$ & $[+0.038, +0.098]$ & $+0.060$ / $+0.065$ / $+0.077$ & equivalent \\
GRU & $+0.161$ & $[+0.123, +0.199]$ & $+0.172$ / $+0.169$ / $+0.142$ & consistent \\
Memo & $-0.059$ & $[-0.103, -0.017]$ & $-0.066$ / $-0.025$ / $-0.087$ & no consistent difference \\
Memo, fixed segments & $-0.033$ & $[-0.073, +0.005]$ & $-0.010$ / $-0.061$ / $-0.026$ & equivalent \\
Sliding window & $+0.053$ & $[+0.002, +0.107]$ & $+0.010$ / $+0.102$ / $+0.048$ & no consistent difference \\
\nocarry{} & $+0.066$ & $[+0.033, +0.105]$ & $+0.049$ / $+0.060$ / $+0.087$ & no consistent difference \\
\method{}-R & $+0.048$ & $[+0.016, +0.084]$ & $+0.061$ / $+0.052$ / $+0.031$ & equivalent \\
\bottomrule
\end{tabular}
\end{table}

\begin{table}[!tp]
\centering
\caption{\textbf{Paired differences on long-running tasks and summary-length
ablations}, reported as in Table~\ref{tab:contrasts}.}
\label{tab:contrasts-long}
\footnotesize
\setlength{\tabcolsep}{2.5pt}
\begin{tabular}{lcccl}
\toprule
\method{} minus & Mean & 95\% interval & Seeds & Verdict \\
\midrule
\multicolumn{5}{l}{\emph{Key-to-Door, last 500 of 4,000 steps} ($\dlt=1$)} \\
Full-history Transformer & $+17.8$ & $[+13.1, +21.5]$ & $+20.8$ / $+19.8$ / $+12.8$ & consistent \\
GRU & $+5.4$ & $[+1.9, +8.5]$ & $+1.6$ / $+6.8$ / $+7.7$ & consistent \\
Memo & $+35.4$ & $[+27.0, +42.3]$ & $+37.9$ / $+26.0$ / $+42.2$ & consistent \\
Memo, fixed segments & $+14.6$ & $[+10.0, +18.8]$ & $+17.4$ / $+10.0$ / $+16.4$ & consistent \\
Sliding window & $+41.1$ & $[+38.6, +43.8]$ & $+42.4$ / $+41.6$ / $+39.3$ & consistent \\
\nocarry{} & $+43.1$ & $[+40.6, +45.6]$ & $+42.8$ / $+43.8$ / $+42.7$ & consistent \\
\method{}-R & $+0.7$ & $[-0.7, +2.1]$ & $-0.3$ / $+0.7$ / $+1.8$ & no consistent difference \\
\midrule
\multicolumn{5}{l}{\emph{MazeRunner, last 500 of 4,000 steps} ($\dlt=1$)} \\
Full-history Transformer & $+25.8$ & $[+22.5, +29.3]$ & $+25.5$ / $+24.8$ / $+27.2$ & consistent \\
GRU & $+7.6$ & $[+5.7, +9.5]$ & $+7.6$ / $+6.8$ / $+8.2$ & consistent \\
Memo & $+24.8$ & $[+21.0, +28.8]$ & $+24.8$ / $+22.0$ / $+27.6$ & consistent \\
Memo, fixed segments & $+23.8$ & $[+20.1, +27.5]$ & $+24.0$ / $+21.6$ / $+25.9$ & consistent \\
Sliding window & $+24.1$ & $[+21.2, +27.2]$ & $+23.2$ / $+23.9$ / $+25.1$ & consistent \\
\nocarry{} & $+7.4$ & $[+4.6, +10.3]$ & $+4.9$ / $+7.7$ / $+9.7$ & consistent \\
\method{}-R & $+21.6$ & $[+18.4, +25.1]$ & $+19.1$ / $+23.7$ / $+22.1$ & consistent \\
\midrule
\multicolumn{5}{l}{\emph{Key-to-Door, doors per later layout} ($\dlt=1$)} \\
Full-history Transformer & $+8.6$ & $[+6.9, +10.3]$ & $+6.9$ / $+10.4$ / $+8.5$ & consistent \\
GRU & $+7.8$ & $[+5.6, +9.9]$ & $+5.6$ / $+10.0$ / $+7.9$ & consistent \\
Memo & $+8.9$ & $[+7.3, +10.5]$ & $+7.2$ / $+10.5$ / $+9.2$ & consistent \\
Memo, fixed segments & $+8.8$ & $[+7.0, +10.6]$ & $+6.9$ / $+10.6$ / $+8.8$ & consistent \\
\nocarry{} & $+7.5$ & $[+5.8, +9.1]$ & $+5.8$ / $+9.2$ / $+7.5$ & consistent \\
\method{}-R & $+7.6$ & $[+5.4, +10.2]$ & $+5.4$ / $+10.4$ / $+7.0$ & consistent \\
\midrule
\multicolumn{5}{l}{\emph{MazeRunner, goals per 500 steps on the fourth map} ($\dlt=1$)} \\
Full-history Transformer & $+22.6$ & $[+19.9, +25.3]$ & $+23.2$ / $+21.1$ / $+23.3$ & consistent \\
GRU & $+6.9$ & $[+4.9, +9.0]$ & $+7.9$ / $+5.7$ / $+7.1$ & consistent \\
Memo & $+21.0$ & $[+17.5, +24.4]$ & $+21.9$ / $+17.9$ / $+23.3$ & consistent \\
Memo, fixed segments & $+20.0$ & $[+17.1, +23.1]$ & $+20.9$ / $+17.7$ / $+21.6$ & consistent \\
\nocarry{} & $+5.1$ & $[+2.9, +7.7]$ & $+3.6$ / $+4.4$ / $+7.3$ & consistent \\
\method{}-R & $+22.6$ & $[+19.8, +25.6]$ & $+23.6$ / $+20.9$ / $+23.4$ & consistent \\
\midrule
\multicolumn{5}{l}{\emph{Key-to-Door, summary length: \method{} ($M=4$) minus $M$ tokens} ($\dlt=1$)} \\
$M=1$, steps 1--500 & $-0.5$ & $[-4.0, +2.7]$ & $-4.0$ / $+2.0$ / $+0.3$ & no consistent difference \\
$M=1$, steps 3,501--4,000 & $+0.7$ & $[-1.9, +2.8]$ & $-1.9$ / $+1.5$ / $+2.4$ & no consistent difference \\
$M=8$, steps 1--500 & $-1.8$ & $[-4.7, +1.4]$ & $-2.5$ / $+1.2$ / $-4.0$ & no consistent difference \\
$M=8$, steps 3,501--4,000 & $+0.7$ & $[-1.2, +2.9]$ & $-0.3$ / $+3.1$ / $-0.7$ & no consistent difference \\
$M=16$, steps 1--500 & $-1.0$ & $[-3.7, +1.7]$ & $-1.2$ / $+1.0$ / $-2.9$ & no consistent difference \\
$M=16$, steps 3,501--4,000 & $+0.1$ & $[-0.8, +1.3]$ & $+0.6$ / $-0.1$ / $-0.3$ & no consistent difference \\
\bottomrule
\end{tabular}
\end{table}

\section{Additional Results}
\label{app:experiments}

Appendices~\ref{app:in-context} and~\ref{app:extended} extend
Section~\ref{sec:results-endpoint};
Appendices~\ref{app:interventions}--\ref{app:attention} extend
Section~\ref{sec:results-mechanism}; Appendix~\ref{app:ablations} extends
Section~\ref{sec:ablations}; and Appendix~\ref{app:learning} shows the
training runs of Section~\ref{sec:setup}.

\subsection{In-context adaptation at the training horizon}
\label{app:in-context}

Figure~\ref{fig:in-context} shows adaptation within one task at the training
horizon. On Key-to-Door every memory except \nocarry{} raises its door success
from about 0.15 on the first attempt to above 0.8 on the ninth by reusing the
layout it found, while \nocarry{} stays near its first-attempt rate. On
CountRecall \method{}'s cumulative accuracy stays close to the full history's
through all three rewrites, whereas \nocarry{} falls once queries reach beyond
its buffer and both Memo variants drift down late in the stream. On MazeRunner
the fraction of maps reaching each ordered goal is similar across memories,
with the GRU lowest. On the T-Maze every memory reaches the goal the cue
indicates on every task and on both cue sides, whereas \nocarry{} turns the
same way on every task and succeeds only when the cue points that way.

\begin{figure}[H]
 \centering
 \includegraphics[width=\linewidth,trim=0bp 0bp 0bp 2bp,clip]{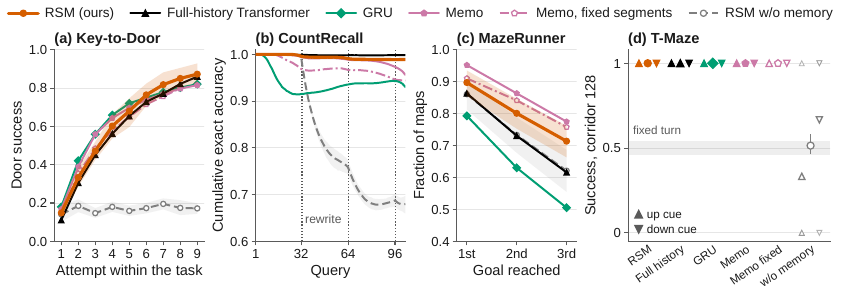}
 \caption{\textbf{Within-task adaptation.}
(a) Key-to-Door door success by attempt. (b) CountRecall cumulative accuracy;
dotted: rewrites after queries 32, 64, 96. (c) Fraction of MazeRunner maps
reaching each ordered goal. (d) T-Maze success at corridor 128 over all tasks
(95\% interval) and by cue side (triangles; faint: seeds); shaded: the success
of a fixed turn. Bands in (a--c): \method{} and \nocarry{}.}
 \label{fig:in-context}
\end{figure}

\subsection{Long-running evaluations}
\label{app:extended}

Table~\ref{tab:wipe} separates what each memory carries from one attempt to
the next on one MazeRunner map (Appendix~\ref{app:long-protocol}).
The retained-memory mean for \method{} exceeds its wiped-memory mean in both
the first and last windows, by similar amounts. For both Memo variants, the
retained-memory mean falls to a few goals per window, whereas the wiped-memory
mean stays near its early value. The full-history Transformer declines even
when wiped, so its retained history alone cannot explain the decline, and its
retained-versus-wiped difference is negligible in the last window.
\mbox{\method{}-R} shows the same retained-memory decline
(Appendix~\ref{app:ablations}), while the GRU and \nocarry{} means change
little. Table~\ref{tab:wipe} reports point estimates; we therefore treat these
kept-versus-wiped comparisons as descriptive.

\begin{table}[t]
\centering\small
\caption{\textbf{Memory kept or wiped at every attempt} on one MazeRunner map:
goals in the first and last 500 of 4,000 steps (confirmation maps, three
seeds).}
\label{tab:wipe}
\begin{tabular}{lcccc}
\toprule
 & \multicolumn{2}{c}{Memory kept} & \multicolumn{2}{c}{Memory wiped} \\
\cmidrule(lr){2-3}\cmidrule(lr){4-5}
 & 1--500 & 3,501--4,000 & 1--500 & 3,501--4,000 \\
\midrule
\method{} (ours) & 25.0 & 26.6 & 21.4 & 22.2 \\
Full-history Transformer & 8.4 & 0.8 & 3.6 & 0.7 \\
GRU & 17.6 & 19.1 & 14.9 & 15.7 \\
Memo & 19.5 & 1.8 & 21.1 & 22.2 \\
Memo, fixed segments & 18.5 & 2.8 & 18.2 & 18.7 \\
\midrule
\nocarry{} & 18.4 & 19.2 & 17.7 & 18.2 \\
\method{}-R (residual) & 20.8 & 5.0 & 20.4 & 21.2 \\
\bottomrule
\end{tabular}
\end{table}

Figure~\ref{fig:new-tasks} resolves the new-task runs of
Figure~\ref{fig:at-a-glance}g,h in time. With memory never reset,
\method{} is the only memory whose mean success rate rises within later
Key-to-Door layouts: averaged over tasks, its rate within each later layout climbs from
under one door per 100 steps to about 3.5, for 9.4 doors per layout against
at most 1.9 for every other memory. This average comes from a minority of
layouts. \method{} completes a door in 34\% of later layouts, ranging from
22\% to 48\% across seeds, and then repeats it at about its adapted rate, but
completes none in the rest. This asymmetry suggests that information retained
from the preceding layout can bias subsequent search, but the experiment does
not identify the content responsible. The other memories that carry the old layout complete a
door in at most 15\% of later layouts, so in panel (c), where a layout
without a door counts as 500 steps, all of them sit near that limit (about
410 steps for \method{}, at least 450 for the others). Only \nocarry{}, which
carries no stale layout, finds a door in most later layouts (72\%, after
about 270 steps), but it cannot repeat what it finds. On MazeRunner, \method{} finishes its first attempt
on each new map in about 200 steps, as on the first map, against about 300
for the GRU and at least 370 for the growing memories.

\begin{figure}[H]
 \centering
 \includegraphics[width=\linewidth,trim=0bp 0bp 0bp 2bp,clip]{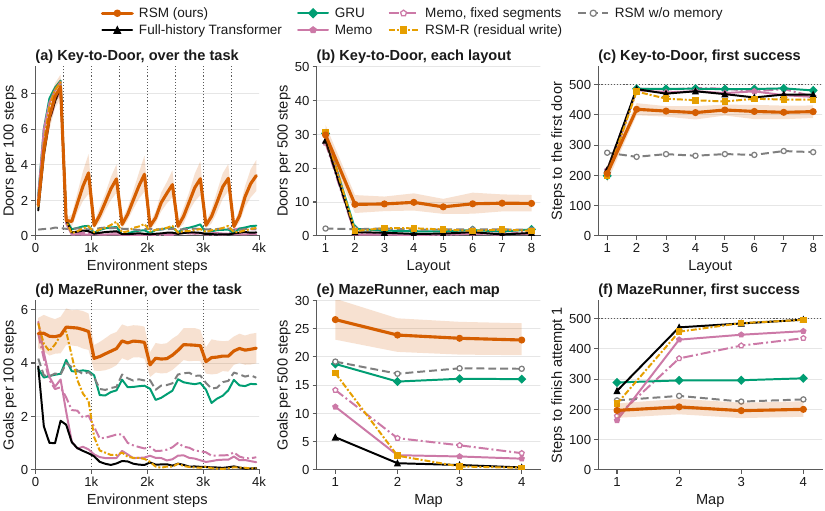}
 \caption{\textbf{Re-adaptation without memory resets} (4,000 steps).
New Key-to-Door layouts every 500 steps (top); new MazeRunner mazes/goals
every 1,000 (bottom). (a,d) Successes per 100 steps; changes dotted.
(b,e) Doors per layout/goals per 500 steps. (c,f) Steps to first door
(censored at 500)/completed attempt. Band: \method{}.}
 \label{fig:new-tasks}
\end{figure}

\subsection{Summary transplants}
\label{app:interventions}

At one boundary, we replace every task's written summary with the summary
that another task of the same seed read at that boundary, and leave the
buffer, the weights and everything else unchanged; the control writes the
initial summary at the same boundary (\emph{cleared once}). On Key-to-Door,
the first attempt after the boundary visits the donor's key cell far more
often than with the task's own summary or with the summary cleared once
(Figure~\ref{fig:transplant}b), and the misdirected policy completes fewer
doors than the cleared one for the rest of the task
(Figure~\ref{fig:transplant}a): the transplanted summary redirects the policy
towards donor-associated cells, and the effect persists across later rewrites.
On CountRecall the
transplanted policy answers with the counts that the donor's summary implies
rather than the true counts until the next rewrite, which moves its answers
away from the donor's counts and partly towards the task's own
(Figure~\ref{fig:transplant}c). Because the early cards are not presented
again, subsequent rewards are the only new feedback about whether answers
that depend on those early counts are correct.

\begin{figure}[H]
 \centering
 \includegraphics[width=\linewidth,trim=0bp 0bp 0bp 2bp,clip]{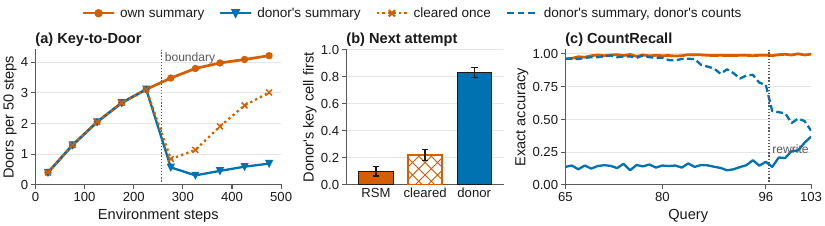}
 \caption{\textbf{Summary transplants.}
At one boundary, the summary is replaced by a donor task's (blue) or the
initial summary (dotted). (a) Key-to-Door doors per 50 steps; transplant before
step 257. (b) Fraction of tasks whose next attempt visits the donor's key
first. (c) CountRecall accuracy after query 64, scored against true (solid)
or donor-consistent (dashed) counts.}
 \label{fig:transplant}
\end{figure}

\subsection{What the summary holds}
\label{app:representation}

Figure~\ref{fig:representation-detail} looks inside the frozen summary
(Section~\ref{sec:results-mechanism} and Section~\ref{sec:ablations}). In
panels (a--c) and (e--g), one per seed, each dot is one confirmation task,
placed by the summary its policy reads at one boundary and coloured by a hidden
variable of that task; panels (d,h) report how well a linear read-out recovers
each hidden variable from the summary. Ridge
regressions fitted on the 64 development tasks and scored on the 256
confirmation tasks recover every count on CountRecall almost perfectly and the
row and column of the hidden key on Key-to-Door well, on every seed; the
door's position is recovered less well. Projected on its first two principal
components, the Key-to-Door summary orders the tasks by where the key lies and
the CountRecall summary by the counts, most clearly on seed~42 (the read-outs
of the other seeds are as high). Panels (i--k) follow one MazeRunner map
over 4,000 steps of repeated attempts: (i,j) compare the summary written at
each point of the run with the summary at every other point, and (k) tracks
its size. \method{}'s summaries retain high within-map similarity at a steady
norm across rewrites, consistent with a stable map-dependent representation.
\mbox{\method{}-R} summaries instead drift in one direction until the first
and last are anti-correlated, at a norm tens of times larger.
Proposition~\ref{prop:summary-range} provides a horizon-independent norm bound
for the overwrite, but only a boundary-dependent bound for the additive
write. These diagnostics are consistent with, but do not establish the cause
of, \mbox{\method{}-R}'s decline on long MazeRunner runs
(Section~\ref{sec:ablations}). A good read-out shows that the information is
present in the summary, not that the policy uses it; the interventions of
Section~\ref{sec:results-mechanism} test its use.

\begin{figure}[H]
 \centering
 \includegraphics[width=\linewidth,trim=0bp 1bp 0bp 0bp,clip]{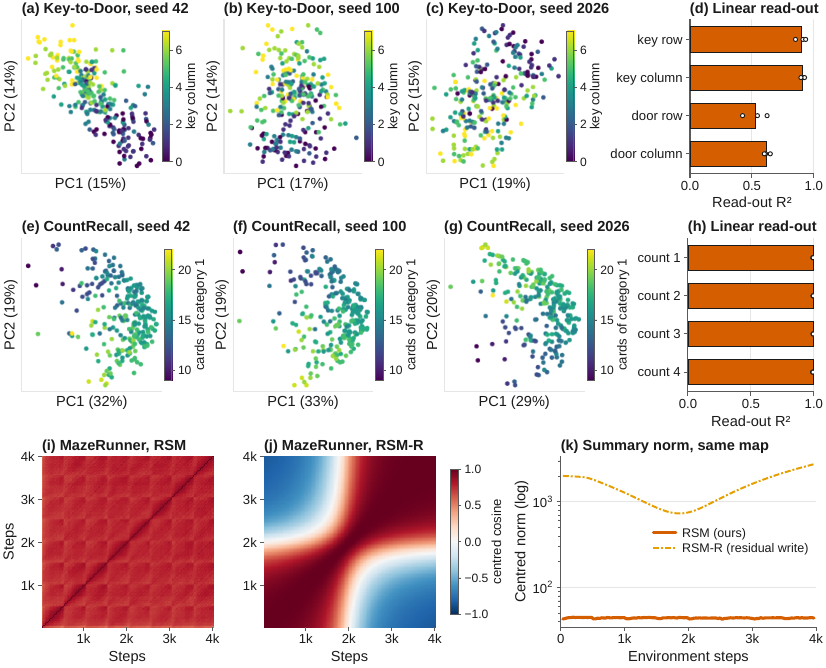}
 \caption{\textbf{Summary representations.}
(a--c,e--g) First two principal components, fitted per seed on development
tasks; colour: key column (Key-to-Door, step 257)/first-category count
(CountRecall, query 65). (d,h) Ridge $R^2$; bars: seed means; dots: seeds.
(i,j) Centred cosine similarity of the summaries over repeated MazeRunner
attempts on one map; (k) centred
norms. Panels (i--k): first 64 confirmation maps.}
 \label{fig:representation-detail}
\end{figure}

\paragraph{Probe details.} Each read-out is a ridge regression from the
flattened summary ($Md$ values) that one segment reads, at step 257 on
Key-to-Door (segment 8) and at query 65 on CountRecall (segment 2), to the
task's hidden variables: the key's and the door's row and column, or the four
running counts. It is fitted per seed on the 64 development tasks and scored
as $R^2$ on the 256 confirmation tasks. Features and targets are centred by
the development means, and the penalty is chosen by five-fold cross-validation
on the development tasks from 17 values spaced logarithmically between
$10^{-3}$ and $10^{5}$. The principal components are fitted per seed on the
same development summaries. For panels (i--k), each seed's flattened summaries
are centred by their mean over all maps and segments; panels (i,j) show the
cosine similarity between the centred summaries read by two segments of the
same map, averaged over maps and then seeds, and panel (k) their mean norm.

\subsection{Summary attention}
\label{app:attention}

The READ-slot attention share is the attention mass that a decision's heads
place on the READ slots, averaged over heads. We compare it with the share
$M/(M+i)$ under uniform attention at segment position $i$ and with
\nocarry{}, whose READ slots always hold $Z_{\rm init}$. Because \nocarry{} is
separately trained, it controls for fixed READ-slot content and position, not
for position alone. Figure~\ref{fig:attention} shows the READ-slot attention
share by position in the segment and the effect of biasing decisions'
attention away from the summary or the buffer. The decisions assign more
attention mass to these slots than \nocarry{} does, in every layer on
Key-to-Door and in the second and third layers on CountRecall, on every seed.
Against uniform attention the excess is clear only in the first layer on
Key-to-Door and the second on CountRecall; the other layers stay near or below
$M/(M+i)$ early in the segment and only slightly above it later.
The share hardly depends on where the needed evidence lives: queries whose
evidence survives only in the summary, and the first steps of later
Key-to-Door attempts, draw only marginally more summary attention than their
position predicts. For Figure~\ref{fig:room-attention}
(Section~\ref{sec:results-mechanism}), each decision's summary share minus
the seed's mean at the same segment position is averaged over the decisions
taken at each cell of the Key-to-Door room, on all 256 confirmation tasks at
the training horizon, pooling seeds; every cell holds at least 50 decisions.

\begin{figure}[H]
 \centering
 \includegraphics[width=\linewidth,trim=0bp 0bp 0bp 2bp,clip]{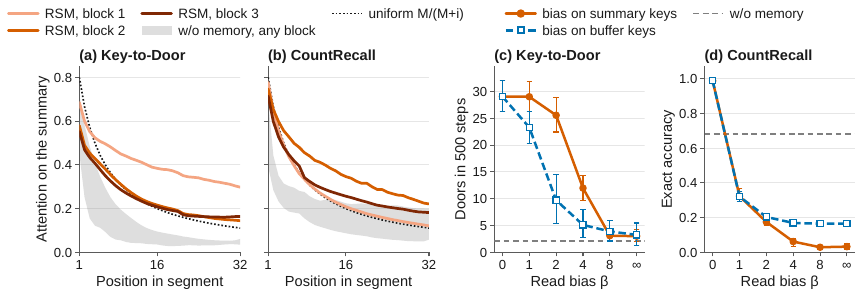}
 \caption{\textbf{Summary attention and read interventions.}
(a,b) Attention to four summary tokens by segment position/layer;
grey band: \nocarry{} (separately trained fixed-READ baseline); dotted:
uniform $M/(M+i)$.
(c,d) Performance under attention-score bias $-\beta$ on summary (solid)/earlier buffer keys
(dashed); $\beta=\infty$ blocks access; grey dashed: \nocarry{}.}
 \label{fig:attention}
\end{figure}

\subsection{Ablations}
\label{app:ablations}

Figure~\ref{fig:ablations} follows the carry and rewrite ablations over the
long-running tasks. On Key-to-Door \mbox{\method{}-R} shows no consistent
difference from \method{} over all 4,000 steps, while \nocarry{} stays near
two doors per window. On one MazeRunner map \mbox{\method{}-R} declines from
window to window until it falls below \nocarry{}; the same pattern is absent
when its memory is wiped at every attempt (Table~\ref{tab:wipe}), implicating
the carried state without identifying which property of that state causes
the decline. When the maze changes every 1,000 steps, \mbox{\method{}-R} falls
to a few goals on the second map and almost none after, whereas \method{} and
\nocarry{} keep a steady rate.

\begin{figure}[H]
 \centering
 \includegraphics[width=\linewidth,trim=0bp 0bp 0bp 2bp,clip]{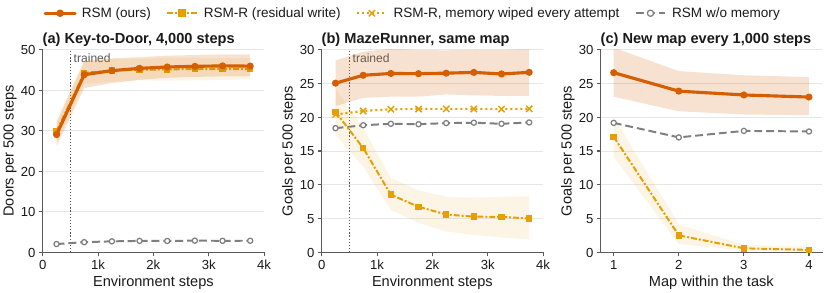}
 \caption{\textbf{Carry and rewrite ablations} (Section~\ref{sec:ablations}; 500-step windows).
(a) Key-to-Door doors. (b) MazeRunner goals on one repeated map;
crosses: RSM-R with memory wiped each attempt. (c) Goals on four maps,
changed every 1,000 steps. Dotted: training horizon.}
 \label{fig:ablations}
\end{figure}

\method{} detach is trained on CountRecall with the same recipe and seeds, its
summary detached at every boundary so that no later loss reaches the write.
\method{} exceeds \method{} detach, and \method{} detach exceeds \nocarry{};
both gaps exceed $\dlt$ on every seed and have joint intervals that exclude
zero. The summary-length arms train \method{} with
$M\in\{1,8,16\}$ on Key-to-Door under the same recipe, buffer and seeds, and
are compared with $M=4$ by paired differences on the confirmation tasks at 500
steps and in the last 500 of 4,000 steps with the layout kept
(Table~\ref{tab:summary-length}).

\subsection{Learning curves}
\label{app:learning}

Figure~\ref{fig:learning} shows collection return and greedy development
performance over the 8M training steps, with every memory trained by the same
learner and interaction budget. Every memory reaches the development threshold
(Appendix~\ref{app:statistics}) on every benchmark, except \nocarry{} on
Key-to-Door and CountRecall, and all but \nocarry{} solve the T-Maze early in
training. On Key-to-Door \method{} takes longer than every other memory that
reaches the threshold, crossing it at 4.4M to 5.2M steps across seeds against
at most 3.6M for the others, and then reaches their level by the end of
training; on MazeRunner the GRU learns slowest.

\begin{figure}[H]
 \centering
 \includegraphics[width=\linewidth,trim=0bp 0bp 0bp 2bp,clip]{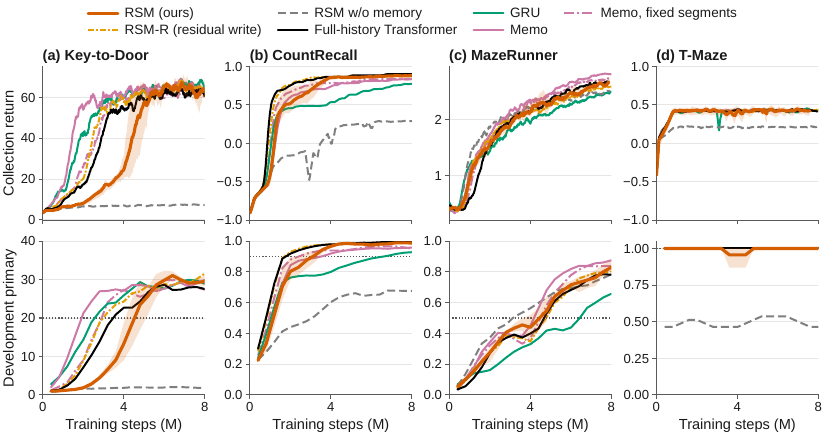}
 \caption{\textbf{Learning at matched experience.}
Collection return (top)/greedy development performance (bottom) versus
training steps for every memory on all four benchmarks. Lines: seed means;
band: \method{}'s seed range;
dotted: development thresholds. For display only, collection curves use a
centred 25-epoch window and development curves a centred three-evaluation
window, both with edge padding; threshold crossings use the raw series.}
 \label{fig:learning}
\end{figure}

\end{document}